\documentclass{article}

% if you need to pass options to natbib, use, e.g.:
%     \PassOptionsToPackage{numbers, compress}{natbib}
% before loading neurips_2025

% ready for submission
% \usepackage{neurips_2025}

% to compile a preprint version, e.g., for submission to arXiv, add add the
% [preprint] option:
    \usepackage[preprint]{neurips_2025}

% to compile a camera-ready version, add the [final] option, e.g.:
%     \usepackage[final]{neurips_2025}

% to avoid loading the natbib package, add option nonatbib:
%    \usepackage[nonatbib]{neurips_2025}

\usepackage[utf8]{inputenc} % allow utf-8 input
\usepackage[T1]{fontenc}    % use 8-bit T1 fonts
\usepackage{hyperref}       % hyperlinks
\usepackage{url}            % simple URL typesetting
\usepackage{booktabs}       % professional-quality tables
\usepackage{amsfonts}       % blackboard math symbols
\usepackage{nicefrac}       % compact symbols for 1/2, etc.
\usepackage{microtype}      % microtypography
\usepackage{xcolor}         % colors

% Packages for figures
\usepackage[pdftex]{graphicx}
\usepackage{subfigure}
\usepackage{subcaption}

% For theorems and such
\usepackage{amsmath}
\usepackage{amssymb}
\usepackage{mathtools}
\usepackage{amsthm}

% if you use cleveref..
\usepackage[capitalize,noabbrev]{cleveref}

% Attempt to make hyperref and algorithmic work together better:

%%%%%%%%%%%%%%%%%%%%%%%%%%%%%%%%
% THEOREMS
%%%%%%%%%%%%%%%%%%%%%%%%%%%%%%%%
\theoremstyle{plain}
\newtheorem{theorem}{Theorem}[section]
\newtheorem{proposition}[theorem]{Proposition}
\newtheorem{lemma}[theorem]{Lemma}
\newtheorem{corollary}[theorem]{Corollary}
\theoremstyle{definition}
\newtheorem{definition}[theorem]{Definition}

\theoremstyle{remark}

%------------------------------------------------------------------------------
% Additional preamble
\usepackage{amsmath,amssymb,amsthm,bbm,mathtools}
\usepackage{algorithm,algorithmicx,algpseudocode}
\usepackage{graphicx,subfigure}
\usepackage{threeparttable}
\usepackage{enumitem}
\usepackage{multirow}
\usepackage{url}
\usepackage{bm}
\usepackage{color}
\usepackage[all]{xy}
\usepackage{hyperref}
\usepackage{xspace}
\usepackage{booktabs}
\usepackage{soul}
\usepackage{lipsum}

\graphicspath{{figs/}}

\newcommand{\model}{RWPT\xspace}

\newcommand{\ve}{\mathbf{e}}

\newcommand{\vh}{\mathbf{h}}

\newcommand{\vv}{\mathbf{v}}

\newcommand{\vx}{\mathbf{x}}

\newcommand{\mE}{\mathbf{E}}

\newcommand{\mH}{\mathbf{H}}

\newcommand{\tB}{\mathcal{B}}

\newcommand{\tL}{\mathcal{L}}

\newcommand{\Romannumber}[1]{\uppercase\expandafter{\romannumeral #1}}

\newcommand{\expect}{\mathbb{E}}

\newcommand{\zeros}{\mathbf{0}}

%------------------------------------------------------------------------------

\title{Toward a Graph Foundation Model: \\ Pre-Training Transformers With Random Walks}

% The \author macro works with any number of authors. There are two commands
% used to separate the names and addresses of multiple authors: \And and \AND.
%
% Using \And between authors leaves it to LaTeX to determine where to break the
% lines. Using \AND forces a line break at that point. So, if LaTeX puts 3 of 4
% authors names on the first line, and the last on the second line, try using
% \AND instead of \And before the third author name.

\author{
  Ziyuan Tang\thanks{Most of this work was completed during the author's internship at MIT-IBM Watson AI Lab.} \\
  Department of Computer Science \\ \& Engineering \\
  University of Minnesota\\
  \texttt{tang0389@umn.edu}
  \And
  Jie Chen \\
  MIT-IBM Watson AI Lab \\
  IBM Research \\
  \texttt{chenjie@us.ibm.com}
}

\begin{document}

\maketitle

\begin{abstract}
  A foundation model like GPT elicits many emergent abilities, owing to the pre-training with broad inclusion of data and the use of the powerful Transformer architecture. While foundation models in natural languages are prevalent, can we build similar models for graphs? This paper describes an approach toward a graph foundation model that is pre-trained with diverse graph datasets by adapting the Transformer backbone. A central challenge toward this end is how a sequence model encodes graphs of varying sizes and from different domains. We propose representing a node as multiple random walks, such that the Transformer can extract node representations from sequences, which in turn form edge and graph representations. We develop a novel context prediction loss for these random walks and theoretically analyze their expressive power in distinguishing neighborhoods and graphs. We also demonstrate the pre-training of our model and its adaptation to downstream tasks, showcasing its potential as a foundation for processing and reasoning with graph-structured data.
\end{abstract}

%------------------------------------------------------------------------------
\section{Introduction}\label{sec:intro}

The concept of a \emph{foundation model} was conceived based on substantial evidence suggesting that a pre-trained Transformer can solve many natural language tasks when properly adapted~\citep{Bommasani2021,Vaswani2017}. Moreover, when scaled with sufficient parameters, these Transformers elicit
emergent abilities that are not present in smaller models~\citep{Wei2022}. A key to sustaining the performance of these Transformers is the concurrent scaling of the training data and the model size~\citep{Kaplan2020,Hoffmann2022}. Informally speaking, the current large language models (LLMs) are trained on at least the entire Internet.

We ask if a similar foundation model exists for graphs~\citep{Mao2024}. Such a model is pre-trained with and adapted to various kinds of graphs. Unlike natural language data that are sequential, graph data pose unique challenges for Transformers. First, graph data are non-sequential. Invariance and equivariance to node permutations require forgoing the positional encoding that is crucial for sequence Transformers; the graph structure will need to be encoded differently (such as using structural encoding or attention bias). Second, graph sizes vary, with node counts ranging from fewer than ten to several billion in practice. Meanwhile, the context length of a typical LLM is on the order of thousands, causing difficulties in batching graphs. While the nodes of small graphs can be chained together to fill a sequence, a large graph will need to be partitioned such that its nodes form multiple sequences. In this case, the connection between different sequences is lost.

\emph{Our objective is to develop a methodology toward building a foundation model for graphs}, with the following desiderata:
\begin{itemize}[leftmargin=7mm]
\item[\textbf{D1:}] The model should be pre-trained with a broad inclusion of graph datasets in a self-supervised manner without the influence of task labels.
\item[\textbf{D2:}] The model can be adapted to any downstream tasks and transferred to graphs in a new domain.
\item[\textbf{D3:}] The model can handle graphs of varying sizes, ranging from small molecules to large networks.
\item[\textbf{D4:}] The model can capture long-range interactions when they are important for the task at hand.
\end{itemize}

These desiderata require a holistic design of the model architecture, input and output formats, training objectives, and downstream adaptation strategies. We begin with interrogating the strengths and limitations of the two most widely used graph deep learning architectures: Graph Neural Networks (GNNs)~\citep{Li2016,Kipf2017,Hamilton2017,Gilmer2017,Velickovic2018,Xu2019,Gasteiger2019,Chen2020} and Graph Transformers (GTs)~\citep{Dwivedi2021,Kreuzer2021,Ying2021,Wu2021,Rampasek2022,Chen2022,Ma2023}. The computation pattern of GNNs is neighborhood aggregation; as a result, the main challenge is a uniform network depth for all graphs and the handling of long-range interactions if exist~\citep{Dwivedi2022}. Experience suggests that GNNs for different domains vary substantially in depth. While one may attempt to take the maximum depth, which also resolves the long-range challenge, on other occasions deep GNNs suffer from the over-smoothing problem~\citep{Li2018}. Mitigation approaches exist, such as residual connections~\citep{Chen2020} and edge removals~\citep{Rong2020}, but many layers aggravate the neighborhood explosion problem because typical neighborhood sampling methods~\citep{Hamilton2017,Chen2018} will still create an enormous neighborhood. In the pursuit of a foundation model, recent approaches~\citep{Liu2024,Wang2024} sample small-hop neighborhoods for large graphs so that the GNN depth can be more flexible, but these neighborhoods still miss long-range information. On the other hand, GTs are a principled approach to incorporating this information because of the pairwise attention, but they are faced with a different challenge---scalability. For a graph with $n$ nodes, it typically takes $O(n^2)$ time to compute the attention scores. Much effort has been devoted to scaling GTs to large $n$, such as (i) using kernel approximation of the softmax attention~\citep{Wu2022,Choromanski2021}, (ii) taking a hierarchical approach~\citep{Zhu2023}, and (iii) changing the input from a sequence of all graph nodes to a sequence of sampled neighbors of one node~\citep{Zhao2021,Zhang2022,Chen2023}. However, approaches (i) and (ii) still have trouble with batching when graphs have varying sizes and approach (iii) weakens the incorporation of long-range information.

In this work, we propose the \underline{R}andom \underline{W}alk-Based \underline{P}re-Trained \underline{T}ransformer (\model). The main idea behind this model is the use of multiple random walks to represent one node and the retention of the Transformer backbone for its foundational nature in representation learning. RWPT differs from usual GTs in that the Transformer input is neither the whole graph nor a sequence of sampled neighbors. Instead, multiple random walks are taken from a root node, forming ordered sequences including near neighbors and faraway nodes. Random walks are a revival of the early node embedding methods prior to GNNs, such as DeepWalk~\citep{Perozzi2014} and node2vec~\citep{Grover2016}, which permit favoring depth in addition to breadth when considering node co-occurrences. They are key to our holistic design that meets the four aforementioned desiderata: Random walks resolve the batching problem of GTs when training graphs have drastically different sizes (\textbf{D3}); they encode a larger receptive field and better cope with long-range interactions~\citep{Chen2024}, compared with small-hop neighborhood sampling (\textbf{D4}); and they allow the pre-training with any cumulation of graph datasets for scaling (\textbf{D1}) as well as the separation of self-supervised pre-training and downstream adaption (\textbf{D2}), following closely the practice of LLMs for natural language data. Moreover, we theoretically show that random walks with shortest-path distance positional encoding can reconstruct any ball 
(the ego-graph of a node induced by its $r$-hop neighborhood)
and distinguish two balls up to isomorphism. Hence, they are expressive in node representation learning.

Our contributions are as follows:
\begin{itemize}[leftmargin=*]
\item We position four desiderata for a graph foundation model. The first two are parallel to natural language models while the other two are unique to graph-structured data.
\item We propose \model that meets these requirements and addresses the limitations of current graph deep learning models (GNNs and GTs). Central to \model is the use of multiple random walks to represent a node, which subsequently invokes the accompanying designs of positional encoding, attention masking, and training loss for the Transformer.
\item We conduct a theoretical analysis on random walks and show their expressivity in distinguishing node neighborhoods, justifying their use for representation learning.
\item We conduct comprehensive experiments to demonstrate the effectiveness of \model compared with (semi-)supervised and self-supervised methods, highlighting its transferability and adaptivity in cross-domain and cross-task uses.
\end{itemize}

%------------------------------------------------------------------------------
\section{Related work}\label{sec:rw}
This work emerges during the rapid development of foundation models for natural language processing (synonymously called LLMs). Efforts for building a unified model for graph data concurrently occur. Our approach is the most relevant to two recent methods: OFA~\citep{Liu2024} and GFT~\citep{Wang2024}.

OFA~\citep{Liu2024} trains a single GNN across multiple datasets and tasks in a supervised fashion. For node- and link-level tasks, the model operates on $\ell$-hop neighborhood subgraphs rather than the entire graph. It remains unclear whether this framework can be adapted for pretraining without task-specific supervision.
GFT~\citep{Wang2024} pretrains a vocabulary of embedding vectors derived from quantized $\ell$-hop neighborhoods, which are encoded using a GNN. During inference, it assigns the nearest pretrained embedding to a new neighborhood subgraph for downstream tasks. Its reliance on small $\ell$-hop neighborhoods limits its ability to capture long-range dependencies. To address this, we utilize random walks that can reach faraway contexts.

See Section~\ref{appendix:rw} for a more comprehensive discussion of the related work, including GTs, pre-training GNNs, foundation model concepts in GNNs, and LLM-based methods to solve graph problems.

%------------------------------------------------------------------------------
\section{Methodology}\label{sec:methodology}
In this section, we elaborate on the details of the proposed \model model. It is a holistic design that involves not only the neural architecture but also the data representation and training. Three aspects are highlighted: the formulation and encoding of the input sequence, the attention mechanism, and the pre-training loss. 
Figure~\ref{fig:method} shows feedforward flow of \model during inference.

\begin{figure}[t]
  \centering
  \includegraphics[width=\linewidth]{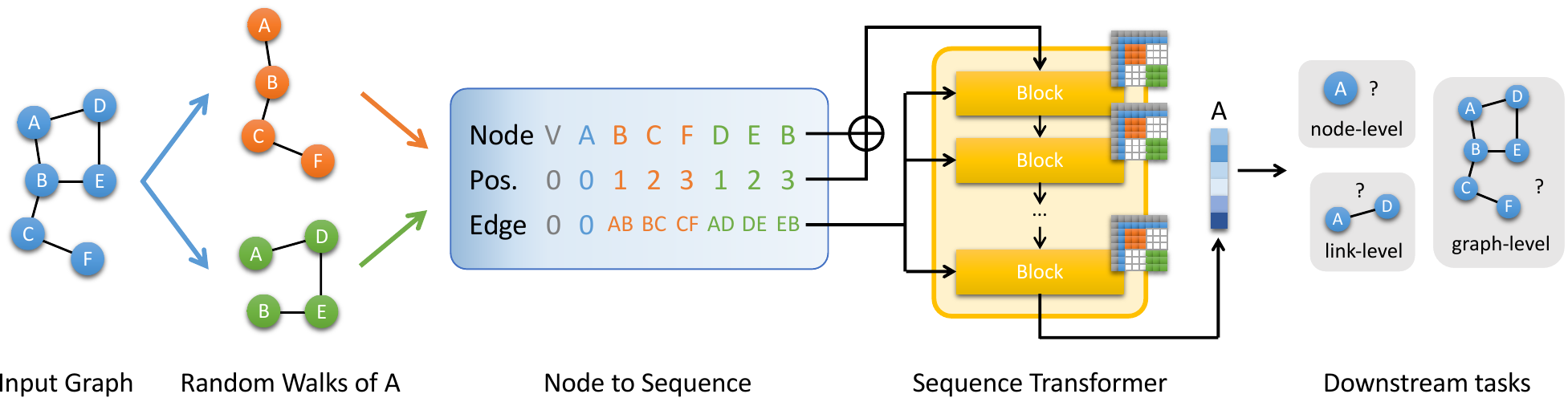}
  \vskip 1pt
  \caption{Pipeline of \model. Each node is represented by multiple random walks formulated into one positionally encoded sequence, augmented with domain information. The sequence is processed by a Transformer with a per-walk attention mask. The node representation is extracted from the output. The model is fine-tuned through training a prediction head dedicated to the downstream task.}
  \label{fig:method}
\end{figure}

%------------------------------------------------------------------------------
\subsection{Random-walk representation of nodes}
\label{sec:rw}

Random walks form ordered sequences, which are natural inputs to the Transformer. In action, assume that $i=i_0$ is the node of interest (i.e., the root node). We run $k$ independent random walks of length $\ell$ starting from $i_0$ and denote by $i_s^r$ the node at step $s=1,\ldots,\ell$ and walk index $r=1,\ldots,k$. We concatenate all nodes, walk by walk, to form a sequence
\begin{equation}\label{eq:catseq}
  \text{seq}(i) = [i_0, i_1^1,\ldots,i_{\ell}^1, i_1^2,\ldots,i_{\ell}^2, \ldots, i_1^k,\ldots,i_{\ell}^k ].
\end{equation}
One may consider that the union of the random walks forms a sampling of the large-hop neighborhood of $i$. This sampled neighborhood differs from a common one resulting from neighborhood sampling~\citep{Hamilton2017,Chen2018} in that a larger walk length $\ell$ is permissible but the number of hops in neighborhood sampling is limited, because the number of sampled nodes is exponential in the hop count. Note that a node in $\text{seq}(i)$ may appear multiple times (because it is sampled by different walks), but its embedding in different appearances may be different because of edge features.

For a foundation model, we intend to pre-train it with multiple datasets. Then, to distinguish graphs from different domains, we introduce a virtual token $v$ for each dataset. We prepend this token to the sequence~\eqref{eq:catseq}; that is, the full sequence for a node $i$ input to \model is
\begin{equation}\label{eq:seq_input}
  s(i) = [v, \text{seq}(i)].
\end{equation}

%------------------------------------------------------------------------------
\subsection{Sequence encoding}\label{sec:seq_enc}
Like a standard Transformer whose input sequence is encoded by token embedding and positional encoding, we define our input node features and positional encoding for \model. Additionally, we incorporate edge features by formulating them into the sequence and adding them to the input of each Transformer block.

%------------------------------------------------------------------------------
\textbf{Unified input node features.}
One of the technical difficulties in designing a graph foundation model is unifying graphs from different domains with varying node features and dimensions. LLM offers a perfect mechanism to mitigate this difficulty~\citep{Chen2023a,He2023,Liu2024}. Nearly all graphs from practical problems are equipped with semantic meanings for their nodes, edges, and even themselves as a whole. For example, the nodes in a molecular graph are atoms and the nodes in a citation graph are papers. They all can be described by text. Hence, we use an LLM to process the textual information, $t_i$, of a node $i$, yielding the node feature vector
\begin{equation}
  \vx_i = \text{LLM}(t_i).
\end{equation}
An advantage of obtaining node features in this manner is that the LLM, as a text foundation model, unifies knowledge of different domains and offers the same output dimension for nodes of any domain.
Even for non-textual graphs, we can leverage an LLM to summarize structural features and generate descriptions~\citep{Liu2024,Wang2024}.
For example, a node comes with local degree profiles and centrality measures, from which the textural description can be ``a node with medium degree and high betweenness centrality; its value on the Fiedler vector belongs to the 90 percentile.''

Similarly, for an edge $ij$ of a graph and the virtual token $v$ of a graph dataset, let their textual description be $t_{ij}$ and $t_v$. Then, we obtain the edge feature and virtual-node feature
\begin{equation}
  \ve_{ij} = \text{LLM}(t_{ij}), \qquad \vv = \text{LLM}(t_v),
\end{equation}
respectively. The virtual-node feature $\vv$ will be used together with node features in the input sequence; the use of the edge feature $\ve_{ij}$ will be elaborated later.

%------------------------------------------------------------------------------
\textbf{Positional encoding.}
We enhance the integration of the graph structure by leveraging positional encodings based on shortest-path (SP) distances. Specifically, for a node $i_s^r$ in the rooted sequence $\text{seq}(i_0)$, its position is defined as the SP distance from the root $i_0$ to $i_s^r$, which is at most $s$.
Additionally, for the virtual token $v$ and the root token $i_0$, their position is $0$. 

The positional encoding is used in the subsequent theoretical analysis of random walks. We can straightforwardly extract SPs from the walks: if the positions of nodes on a walk segment from $u$ to $v$ are monotonically increasing by 1, then this segment must be a shortest path between $u$ and $v$.

%------------------------------------------------------------------------------
\textbf{Incorporating edge features.}
Edge features can be used to enhance the encoding of a node sequence. For the $r$th walk $i_0,i_1^r,\ldots,i_{\ell}^r$, we form an edge-feature sequence
\begin{equation}
  \mE^r = [\ve_{i_0,i_1^r}, \ve_{i_1^r,i_2^r}, \ldots, \ve_{i_{\ell-1}^r,i_{\ell}^r}],
\end{equation}
and we concatenate the $k$ walks and prepend two zero vectors to form the full sequence
\begin{equation}
  \mE = [\zeros, \zeros, \mE^1, \mE^2, \ldots, \mE^k],
\end{equation}
which has the same length as the Transformer input.

Rather than merely adding $\mE$ to the Transformer input, we project $\mE$ and add it to the input of each Transformer block, similar to how the edge information is processed in every layer of a GNN~\citep{Hu2020a}. Specifically, let the $t$th block of a standard Transformer~\citep{Vaswani2017,Liu2018} be
$\mH^{(t+1)} = \text{Block}(\mH^{(t)})$.
We introduce a block-dependent projector (a linear layer), Proj, and modify the block to be
$\mH^{(t+1)} = \text{Block}(\mH^{(t)} + \text{Proj}^{(t)}(\mE))$.
The projectors are mainly used to map data from the LLM output dimension to the Transformer embedding dimension, similar to the one for node features.

%------------------------------------------------------------------------------
\subsection{Per-walk attention mask}\label{sec:attn_mask}
The query-key-value (QKV) attention mechanism typically comes with a mask on the QK product (attention) matrix before softmax. The effect of this mask is to ignore some V items when linearly combining them. For example, the upper triangular part of the attention matrix is masked out in a causal Transformer, because a token will depend on the past but not the future information.

\begin{figure}[t]
\begin{minipage}{0.48\linewidth}
    \centering
    \includegraphics[width=.82\linewidth]{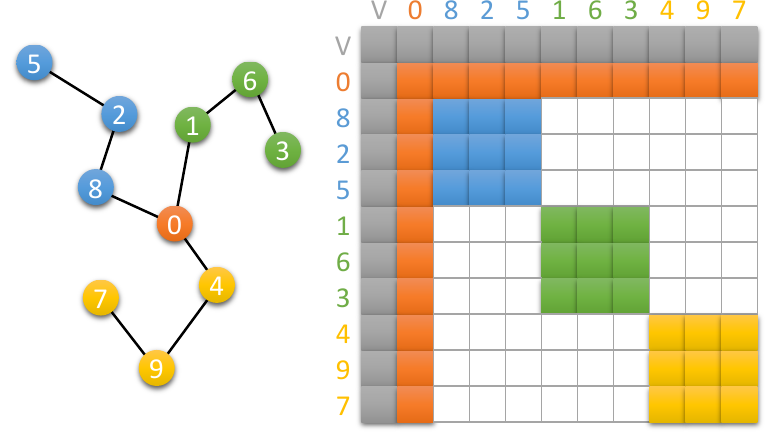}
    \caption{Attention mask.}
    \label{fig:mask}
\end{minipage}
\hfill
\begin{minipage}{0.48\linewidth}
    \centering
    \includegraphics[width=\linewidth]{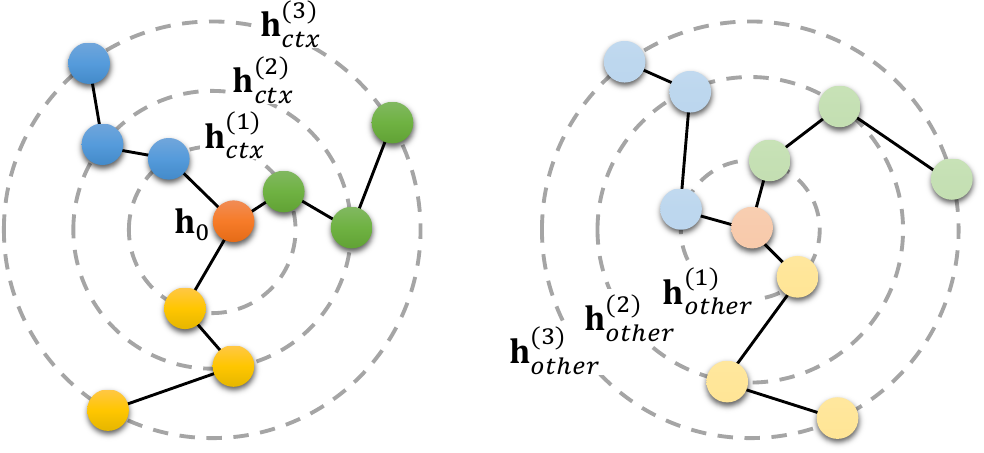}
    \caption{Context learning.}
    \label{fig:context}
\end{minipage}
\vskip -5pt
\end{figure}

In our case, we use a per-walk attention mask to improve scalability. See Figure~\ref{fig:mask} for an illustration. Each random walk will attend to itself but not each other. The virtual token and the root node will still attend to all the tokens. Clearly, with such a mask, the number of nonzeros is reduced by nearly a factor of $k$ and so is the computational cost.

%------------------------------------------------------------------------------
\subsection{Self-supervised pre-training: context prediction}\label{sec:loss}
Let the Transformer output a sequence of vectors corresponding to the input sequence $s(i)$ in~\eqref{eq:seq_input}:
\begin{equation}\label{eq:output}
  [\vh_v, \vh_0, \vh_1^1,\ldots,\vh_{\ell}^1, \vh_1^2,\ldots,\vh_{\ell}^2, \ldots, \vh_1^k,\ldots,\vh_{\ell}^k ].
\end{equation}
The vector $\vh_0$ is the representation of the root node $i$.

The pre-training makes use of the output sequence~\eqref{eq:output} but not task labels. In contrast to the next-token prediction in LLMs, self-supervised learning of GNNs is more often done in a contrastive manner. We follow the infomax principle~\citep{Hjelm2019,Gutmann2012} and develop a contrastive loss suitable for random walks.

The idea is to define increasingly large context windows for the root node $i$ (see Figure~\ref{fig:context}):
\begin{equation}
  \vh_{ctx}^{(j)} = \frac{1}{jk} \sum_{s=1}^j\sum_{r=1}^k \vh_s^r.
\end{equation}
Here, $\vh_{ctx}^{(j)}$ is the representation of the $j$th context window, which includes the nodes up to the $j$th step of all random walks. We maximize the mutual information between the root node $i$ and its context windows, while minimizing that between $i$ and the context windows of other root nodes in the batch. We use an MLP to parameterize the mutual information, leading to the sample loss formula
\begin{equation}\label{eqn:context.loss}
  \tL_{sample} = -\frac{1}{\ell} \sum_{j=1}^{\ell} \Big( \log\text{MLP}( \vh_0\odot\vh_{ctx}^{(j)} )
  + \sum_{\forall other} \log(1-\text{MLP}( \vh_0\odot\vh_{other}^{(j)} )) \Big).
\end{equation}
This loss encourages the representation of the root node to be close to its neighborhood but different from other neighborhoods.

\textbf{Dataset mixture.} Because \model is pre-trained with multiple graph datasets, which may vary in size, we introduce a multiplier $\alpha_D$ for each dataset $D$ of size $n_D$. The batch training consists of multiple passes, each of which iterates over all datasets. For each dataset, a total of $\alpha_Dn_D$ nodes are randomly sampled to form batches.

%------------------------------------------------------------------------------
\subsection{Downstream adaptation}
With the heavy lifting done in pre-training, a downstream use of the model only trains a task head while freezing the pre-trained model. For example, for node classification, the task head is an MLP that takes the node representation as input and outputs the class logits; for link prediction, the input to the MLP task head is a concatenation of the two node representations; and for graph classification, the input is the aggregation of node representations (see Section~\ref{appendix:task_head}). Note that the downstream adaptation can be done for many tasks even though they may seem ``open-ended,'' as long as there is a classification/regression formulation. For example, to predict the shortest path between $u$ and $v$, it suffices to use a prediction head to classify, given any node $w$ on the graph, whether the concatenation of $u$, $v$, and $w$ embeddings will be classified as on the path or off the path~\citep{Battaglia2018}.

Following the practice of LLMs, other adaptation approaches are exploitable, such as fine-tuning all the parameters of the pre-trained model, fine-tuning only the node and edge feature projectors, and using low-rank adaptors or other adaptors, but a full comparison of the different approaches is out of the scope of this work. We find that a simple task head works well.

%------------------------------------------------------------------------------
\section{Theoretical analysis}\label{sec:thm}
The main idea of this work is to use random walks for node representation learning. These walks, together with the SP distance positional encoding, allows reconstructing neighborhoods and distinguishing them. Hence, they are well justified to be part of a foundation model. We formalize these arguments in what follows and provide the accompanying proofs in Section~\ref{appendix:proof}. A graph is denoted as $G(V,E)$ with the node set $V$ and edge set $E$.

\begin{definition}
  The \emph{Shortest Path Distance Oracle} (\emph{SP oracle}) is a function $\psi : V\times V \to \mathbb{R}$ that takes a pair of nodes as input and returns the shortest path distance between these two nodes.
\end{definition}

\begin{definition}
  Denote by $\tB_{u,r} \subset G$ a \emph{ball} centered at node $u$ with radius $r$. Formally, it is a subgraph of $G$ with the node set $V(\tB_{u,r}) := \{v\in V ~|~ \psi(u,v) \leq r\}$ and the edge set $E(\tB_{u,r}) := \{e\in E ~|~ V(e) \subset V(\tB_{u,r})\}$, where $\psi(\cdot,\cdot)$ is the SP oracle.
\end{definition}

\begin{definition}[\citep{Grover2016}]
  A \emph{Biased Random Walk with parameters $p$ and $q$} is a random walk such that after transitioning from node $u$ to node $v$, the unnormalized probability to return to $u$ is $1/p$, that to jump to a direct neighbor of $u$ is 1, and that to jump to other neighbors of $v$ is $1/q$. The usual (unbiased) random walk is recovered with $p=q=1$.
\end{definition}

The following theorem states that a ball can be reconstructed by a sufficient number of random walks together with the SP distance positional encoding.

\begin{theorem}\label{thm:reconstruct}
  Assume that the graph $G$ is undirected and connected, with a bounded degree $d$. Let a ball $\tB_{u,r}$ with center $u$ and radius $r$ have $n$ nodes. The ball can be fully reconstructed given the sequence in Eq.~\eqref{eq:catseq} together with the SP distance of every node from the root $u=i_0$, if the number of walks $k = \Theta(\max(nr, n^2 / r^2))$ and the walk length $\ell = \Theta(r)$.
\end{theorem}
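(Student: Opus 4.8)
The plan is to reconstruct $\tB_{u,r}$ in two stages: first recover the node set, then recover the edge set, by arguing that with the stated number and length of walks, every relevant vertex and every relevant edge is "seen" by at least one walk with high probability, and that the SP positional encoding lets us read off the adjacency structure. The key observation enabling edge recovery is the one already flagged in the excerpt: if a walk traverses a segment from node $a$ to node $b$ whose SP-labels increase by exactly $1$ at each step, that segment is a shortest path, so in particular each consecutive pair on such a segment is an edge. More basically, \emph{every} consecutive pair $(i_{s-1}^r, i_s^r)$ appearing in a walk is an edge of $G$ by definition of a random walk; the SP labels are needed to certify which of the discovered edges lie inside the ball and to pin down the distance structure, i.e.\ the labels $\psi(u,\cdot)$ are exactly the radii we are told are given. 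So edge recovery reduces to: show that for every edge $\{x,y\}$ with $x,y \in V(\tB_{u,r})$, some walk traverses $x \to y$ (or $y \to x$) at some step.

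First I would set up the hitting/covering estimates. Since $G$ is connected with bounded degree $d$ and $\tB_{u,r}$ has $n$ nodes, a single unbiased random walk of length $\ell = \Theta(r)$ started at $u$ stays within distance $\ell$, hence (with $\ell \ge r$) can reach any node of the ball, and the probability that it traverses a \emph{specified} edge $\{x,y\}$ at a \emph{specified} step $s$ is at least $(\text{const})\cdot d^{-\ell}$ once we condition on a shortest $u$–$x$ prefix; more usefully, I would lower-bound the probability $\rho$ that a length-$\ell$ walk traverses a given ball-edge at \emph{some} step. A clean way: a walk that first follows a shortest path from $u$ to $x$ (length $\le r$) and then steps to $y$ realizes the event; the probability of following any particular path of length $\le r$ is at least $(1/d)^{r+1}$, a positive constant depending only on $d,r$. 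Then, over $k$ independent walks, the probability that edge $\{x,y\}$ is never traversed is at most $(1-\rho)^k \le e^{-\rho k}$. Union-bounding over all $\le nd/2$ candidate edges and all $n$ nodes, the failure probability is at most $n(1+d/2)e^{-\rho k}$, which is $o(1)$ as soon as $k = \Omega(\rho^{-1}\log n)$. The role of the sharper bound $k = \Theta(\max(nr, n^2/r^2))$ is to make the argument work with the \emph{unbiased} walk and an \emph{honest} (non-exponential in $r$) per-edge hitting probability: I would replace the crude $(1/d)^{r+1}$ bound by a spectral/conductance estimate showing a length-$\Theta(r)$ walk from $u$ hits a given node at distance $\le r$ with probability $\Omega(1/(nr))$ (diffusive spreading: the walk explores $\Theta(\ell^2)$ "effective" vertices in a bounded-degree expander-like ball, and reaches a given target with probability inverse in the ball size, up to an $r$ factor for the time to get there), and that conditioned on being at $x$ it steps to the neighbor $y$ with probability $\ge 1/d$. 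This gives per-edge success probability $\Omega(1/(nr))$, so $k = \Theta(nr\log n)$, or $k=\Theta(n^2/r^2)$ in the regime where the second term dominates, suffices; the $\log n$ is absorbed into the $\Theta(\cdot)$ or handled by noting only polynomially many events.

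Given the covering event, reconstruction is mechanical: the node set of $\tB_{u,r}$ is exactly $\{w : \psi(u,w)\le r\}$, and every such $w$ appears in $\mathrm{seq}(u)$ with its label $\psi(u,w)$ attached (by the covering event for nodes and the given positional encoding); the edge set is exactly the set of consecutive pairs $(a,b)$ occurring in some walk with $\psi(u,a)\le r$ and $\psi(u,b)\le r$ — soundness holds because consecutive walk pairs are genuine edges, completeness holds because every ball-edge was traversed. I would also note that we never add a spurious edge and never miss one, so the recovered subgraph is literally $\tB_{u,r}$, not merely isomorphic to it. The walk-length requirement $\ell = \Theta(r)$ enters in two places: $\ell \ge r$ so that shortest paths to the ball boundary fit inside a walk, and $\ell = O(r)$ so the per-edge hitting probability is not diluted by the walk wandering far outside the ball (which is also what keeps $\ell$ from helping or hurting the $k$ bound).

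The main obstacle I expect is the sharp per-edge (equivalently per-node) hitting-probability estimate $\Omega(1/(nr))$ for a bounded-degree connected graph using an unbiased walk of length $\Theta(r)$: a naive bound is exponentially small in $r$, and getting the polynomial bound requires a genuine mixing/return-probability argument (e.g.\ bounding return probabilities via Nash-type or isoperimetric inequalities on the ball, or coupling the walk on the ball to a reflected walk and using that a lazy bounded-degree walk of $t$ steps has max hitting probability $\Omega(1/t)$ for nearby targets). A secondary subtlety is boundary effects: the ball $\tB_{u,r}$ is not regular near its boundary and the walk can leave it, so I would argue that it suffices to traverse each ball-edge \emph{before} first exiting the ball, and lower-bound the probability of an excursion that stays in the ball long enough using that staying inside only restricts transitions and the ball still has diameter $\le 2r$. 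Once this estimate is in hand, the union bound and the deterministic reconstruction step are routine.
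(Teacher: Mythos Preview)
Your approach differs substantially from the paper's, and the central estimate you flag as ``the main obstacle'' is in fact unobtainable, not merely hard.

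The paper does \emph{not} argue that every edge of the ball is traversed by some walk. Instead it (i) uses \emph{biased} random walks (node2vec-style, with $q$ chosen small so the walk drifts outward) to make $\ell=\Theta(r)$ sufficient to reach depth $r$; (ii) observes that on each walk, every maximal segment whose SP-labels increase by $1$ at each step yields accurate shortest-path distances between \emph{all pairs} of its nodes, so a single walk produces $\Theta(r^2)$ pairwise SP oracles; and (iii) invokes black-box graph-reconstruction results stating that $\Theta(n^2)$ pairwise SP queries suffice to recover an $n$-node graph. The two terms in $k=\Theta(\max(nr,\,n^2/r^2))$ have separate origins: $nr$ is a coupon-collector bound for covering all nodes of the ball, and $n^2/r^2$ is the number of walks needed to accumulate $\Theta(n^2)$ SP oracles at $\Theta(r^2)$ per walk.

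Your gap is the per-target hitting probability for an \emph{unbiased} walk of length $\Theta(r)$. You hope to show it is $\Omega(1/(nr))$ via spectral, conductance, or Nash-type bounds, but no such estimate holds for general bounded-degree connected graphs: already on a path of length $r$ (so $n=r{+}1$, $d=2$), a simple random walk from one endpoint reaches the other within $\Theta(r)$ steps only with probability $e^{-\Omega(r)}$, far below your target $\Omega(1/r^2)$. The paper's own corollary to its hitting-time lemma makes this explicit: when $p=q$ the expected time to reach depth $r$ is $\Theta(r^2)$, not $\Theta(r)$. The biased walk is therefore not a convenience but the mechanism that makes $\ell=\Theta(r)$ viable at all. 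Separately, your edge-coverage framework gives no natural source for the $n^2/r^2$ term---you assert it appears ``in the regime where the second term dominates'' without ever deriving a second term---whereas in the paper it drops out directly from the SP-oracle count.
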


The above theorem derives the complexities of $k$ and $\ell$ by using a biased random walk. The ball can also be reconstructed by using unbiased walks, at the cost of larger $k$ and $\ell$. Empirically, small $k$ and $\ell$ already deliver competitive downstream performance (see the following section for details).

Because random walks can reconstruct a ball, two balls can be distinguished with a graph kernel.

\begin{theorem}\label{thm:express}
  There exists a positive definite kernel function that distinguishes non-isomorphic balls centered at different nodes of $G$.
\end{theorem}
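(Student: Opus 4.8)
The plan is to promote the reconstruction guarantee of Theorem~\ref{thm:reconstruct} into a kernel on nodes. Fix a radius $r$ and a number of walks $k$ and walk length $\ell$ as prescribed by Theorem~\ref{thm:reconstruct}, so that for every node $u$ the (random) data $\big(\text{seq}(u),\,\psi(u,\cdot)\big)$ almost surely determines the ball $\tB_{u,r}$ up to isomorphism. The idea is to define a feature map $\Phi$ that sends a node $u$ to a canonical encoding of its ball $\tB_{u,r}$ living in an inner-product space, and then set $K(u,v) := \langle \Phi(u),\Phi(v)\rangle$. Positive definiteness is then automatic, because any kernel of the form $\langle \Phi(\cdot),\Phi(\cdot)\rangle$ is positive definite; the only real content is (i) that $\Phi$ is well defined on nodes, i.e. depends only on $u$ (and the graph), not on the random walks, and (ii) that $\Phi(u)=\Phi(v)$ forces $\tB_{u,r}\cong\tB_{v,r}$, which is what ``distinguishes non-isomorphic balls'' means.

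For (i), I would first invoke Theorem~\ref{thm:reconstruct}: with the stated $k,\ell$, the reconstruction succeeds with probability one (or, if one prefers a deterministic statement, one can condition on the walks covering the ball with the monotone-by-one SP property for each boundary node, an event of probability one as $k\to\infty$; alternatively, state the kernel on the reconstructed ball itself rather than on the raw walks). Thus there is a well-defined map $u\mapsto[\tB_{u,r}]$ from $V$ to isomorphism classes of rooted, radius-$r$ balls. For (ii), I would then use the standard fact that the set of isomorphism classes of finite (rooted) graphs embeds injectively into a Hilbert space: enumerate all finite rooted graphs $H_1,H_2,\dots$ up to isomorphism and let $\Phi(u)$ be the indicator sequence $e_{m}$ where $[\tB_{u,r}]=[H_m]$, viewed as an element of $\ell^2$. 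Then $K(u,v)=\langle\Phi(u),\Phi(v)\rangle=\fone\{\tB_{u,r}\cong\tB_{v,r}\}$, which is manifestly positive definite (it is a Gram matrix) and equals $0$ precisely when the two balls are non-isomorphic and $1$ otherwise, hence separates non-isomorphic balls. If one wants a ``graph kernel'' in the conventional sense, replace the crude indicator feature map by any injective-on-isomorphism-classes graph kernel restricted to balls — e.g. the Weisfeiler--Leman subtree kernel is injective on the relevant finite family once enough iterations are taken, or more simply the kernel $K(u,v)=\sum_{H} \fone\{\tB_{u,r}\cong H\}\fone\{\tB_{v,r}\cong H\}$ over a finite list $H$ of all balls appearing in $G$ — and positive definiteness again follows from the Gram-matrix form.

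The only step with any subtlety is the passage from ``random walks reconstruct the ball with high probability'' to ``the kernel is a deterministic function of the node.'' The cleanest fix, and the one I would adopt, is to define the kernel directly on the output of the reconstruction procedure guaranteed by Theorem~\ref{thm:reconstruct} — i.e. $K(u,v):=\langle\Phi(\widehat{\tB}_{u,r}),\Phi(\widehat{\tB}_{v,r})\rangle$ where $\widehat{\tB}_{u,r}$ is the ball recovered from the walk data — and note that on the (probability-one / sufficiently-many-walks) event that reconstruction is exact, $\widehat{\tB}_{u,r}\cong\tB_{u,r}$, so the kernel coincides with the clean node-level kernel above. I expect this ``derandomization bookkeeping'' to be the main obstacle to state crisply; everything after it — injectivity of a graph-isomorphism feature map into a Hilbert space, and positive definiteness of a Gram kernel — is routine. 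I would therefore organize the proof as: (1) recall Theorem~\ref{thm:reconstruct} to get $u\mapsto[\tB_{u,r}]$; (2) pick an injective-on-isomorphism-classes feature map $\Phi$ into an inner-product space; (3) set $K=\langle\Phi,\Phi\rangle$ and read off positive definiteness and ball separation.
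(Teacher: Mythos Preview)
Your proof is correct but takes a different route from the paper's. The paper also invokes Theorem~\ref{thm:reconstruct}, but then uses the recovered shortest-path distances to build the \emph{shortest-path kernel} of Borgwardt and Kriegel: $\Phi(\tB_1,\tB_2) = \sum_{s_1\in SP(\tB_1)}\sum_{s_2\in SP(\tB_2)}\phi(s_1,s_2)$, where $SP(\tB)$ is the set of triples $(v_1,v_2,\psi(v_1,v_2))$ and $\phi$ is the indicator that two triples agree. Your construction instead embeds isomorphism classes directly into $\ell^2$ via indicator vectors, giving the $\{0,1\}$-valued kernel $K(u,v)=\fone\{\tB_{u,r}\cong\tB_{v,r}\}$. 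Your version is cleaner and more obviously sound for the bare existential statement --- positive definiteness and separation are immediate from the Gram form --- whereas the paper's version keeps the explicit link to the SP data actually extracted from the walks, which is the narrative point they are after.

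Two minor remarks. First, your derandomization discussion is more than the theorem requires: the statement is purely existential about a kernel on nodes of $G$, so you may define $K$ directly on the balls $\tB_{u,r}$ and cite Theorem~\ref{thm:reconstruct} only to motivate that the walk data suffices to compute it; there is no need to make $K$ a function of walk realizations. Second, your aside that the Weisfeiler--Leman subtree kernel becomes injective on isomorphism classes ``once enough iterations are taken'' is false in general (WL is provably incomplete for graph isomorphism), so you should drop that suggestion and stick with your indicator-in-$\ell^2$ construction, which is already airtight.
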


%------------------------------------------------------------------------------
\section{Experiments}\label{sec:exp}
In this section, we present a comprehensive set of experiments to evaluate the effectiveness of \model as a graph foundation model, highlighting transferability in cross-domain and cross-task settings.

%------------------------------------------------------------------------------
\subsection{Experiment setup}\label{sec:exp_setup}
\textbf{Datasets.} We use 14 datasets from diverse domains and for varying tasks. They include those supporting node-level tasks (Cora, CiteSeer, PubMed, Arxiv, WikiCS, and Products, where the first four are \textbf{citation networks} and the next two are the \textbf{Web graph} and the \textbf{co-purchase graph}, respectively); those supporting link-level tasks (WN18RR and FB15k237, which are \textbf{knowledge graphs}); and those supporting graph-level tasks (HIV, PCBA, ChEMBL, and Tox21, which are \textbf{molecules}). We also include Peptides-func and Peptides-struct (also molecules) from the \textbf{Long Range} Graph Benchmark~\citep{Dwivedi2022}. Altogether, these datasets contain 25M nodes and 31M edges. See Section~\ref{appendix:dataset} for more details.

\textbf{Baselines.} We compare \model with ten methods in diverse nature, including PRODIGY~\citep{Huang2023}, OFA~\citep{Liu2024}, and GFT~\citep{Wang2024}, \textbf{which are foundation-model style of methods}; GCN~\citep{Kipf2017}, GIN~\citep{Xu2019}, and GAT~\citep{Velickovic2018}, \textbf{which are GNNs trained in a (semi-)supervised manner}; and DGI~\citep{Velickovic2019}, BGRL~\citep{Thakoor2022}, GraphMAE~\citep{Hou2022}, and GIANT~\citep{Chien2022}, \textbf{which are self-supervised training methods}. Note that OFA differs from foundation models in the usual sense in that it does not have a label-free pre-training stage, but we categorize it together with PRODIGY and GFT to distinguish it from the remaining methods that train a different model for each dataset.

\textbf{Settings.} Our Transformer backbone follows a standard architecture like GPT-2 \citep{gpt2},
with modifications introduced in Section \ref{sec:methodology} and hyperparameters detailed in Section~\ref{appendix:exp.details}. We utilize Llama2-7b-hf~\citep{touvron2023llama} for feature extraction; the prompts can be found in Section~\ref{appendix:feature}. 
All experiments are conducted with 2x NVIDIA Tesla V100 16GB GPUs, Intel Xeon Platinum 8260 CPUs (32 cores), 50GiB RAM, and 1TB user storage space.
Each run is repeated ten times with random seeds.

\begin{table}[t]
  \caption{
  Performance comparison of (semi-)supervised, self-supervised, and foundation-model methods for various domains and tasks. \textbf{Bold} and \ul{underline} highlight the best and sub-best performance. Baseline results are replicated from \cite{Wang2024}.
  }
  \label{tab:performance}
  \vskip 5pt
  \centering
  \small
  \sc
  \resizebox{\linewidth}{!}{
  \begin{tabular}{l|cccc|cc|cc}
    \toprule
        & \multicolumn{4}{c|}{Node classification} & \multicolumn{2}{c|}{Link prediction} & \multicolumn{2}{c}{Graph classi.} \\
        & Cora & PubMed & Arxiv & WikiCS & WN18RR & FB15k237 & HIV & PCBA \\
        %Metric & Acc & Acc & Acc & Acc & Acc & Acc & AUC & AUC \\
    \midrule
        MLP & 58.03 & 68.66 & 66.50 & 70.36 & 78.50 & 87.39 & 66.37 & 72.30 \\
        GCN & 75.65 & \ul{75.61} & 71.40 & 75.28 & 73.79 & 82.22 & 64.84 & 71.32 \\
        GIN & 73.59 & 69.51 & 65.05 & 49.77 & 74.02 & 83.21 & 66.86 & 70.12 \\
        GAT & 76.24 & 74.86 & 70.87 & 76.78 & 80.16 & 88.93 & 65.54 & 72.69 \\
    \midrule
        DGI & 72.10 & 73.13 & 69.15 & 75.32 & 75.75 & 81.34 & 59.62 & 63.31 \\
        BGRL & 71.20 & 75.29 & 71.19 & 76.53 & 75.44 & 80.66 & 63.95 & 67.09 \\
        GraphMAE & 73.10 & 74.32 & 70.90 & 77.61 & 78.99 & 85.30 & 61.04 & 63.30 \\
        GIANT & 75.13 & 72.31 & 70.10 & 76.56 & 84.36 & 87.45 & 65.44 & 61.49 \\
    \midrule
        GFT & \ul{78.62} & \textbf{77.19} & \ul{71.93} & \ul{79.39} & \ul{91.91} & \ul{89.72} & \ul{72.67} & \ul{77.90} \\
        \model & \textbf{79.30} & 74.97 & \textbf{75.14} & \textbf{80.27} & \textbf{95.25} & \textbf{95.23} & \textbf{75.15} & \textbf{81.03} \\
    \bottomrule
    \end{tabular}}
\end{table}

%------------------------------------------------------------------------------
\subsection{Cross-domain and cross-task performance}
We first compare \model with a wide array of methods across domains and tasks (node-level, link-level, and graph-level). These methods include (semi-)supervised, self-supervised, and foundation-model methods. The first two classes of methods are not comparable to \model, because they are trained on an individual dataset; however, they set an expectation of the performance. Following GFT, we pre-train \model with ten datasets (see Section~\ref{appendix:exp.details} for their batching ratio) and fine-tune it on each task.
From Table~\ref{tab:performance}, we see that foundation models are uniformly better than individually trained models. Moreover, among foundation models, \model outperforms GFT on seven out of eight tasks.

%------------------------------------------------------------------------------
\subsection{Transferability}
While the outperformance of \model over individually trained models is not surprising, we investigate in depth its transferability. For this, we consider transfer learning and few-shot learning.

%------------------------------------------------------------------------------
\textbf{Transfer learning (dataset- and domain-level transfer).}
This learning paradigm tests a pre-trained model on an unseen dataset or domain. To this end, we pre-train \model with limited datasets and evaluate it on others. Specifically, we use either Arxiv, FB15k237, ChEMBL, or a combination of them and compare with the early use of ten datasets. The three datasets represent different domains: citation, knowledge graph, and molecules.

The results are summarized in Table~\ref{tab:transfer_1} and Figure~\ref{fig:transfer}. We see that using three datasets to pre-train achieves a performance very close to using ten. This suggests that a small amount of representative datasets are already competitive for pre-training, demonstrating the transferability of \model to new datasets. More encouragingly, a model pre-trained with only one dataset achieves competitive performance as well: this performance is significantly higher than the individually trained models on nearly all tasks. For example, the model pre-trained on Arxiv (citation; node classification) performs the second best on WN18RR (knowledge graph; link prediction), with the attained accuracy more than 10 points higher than individually trained models.

To better highlight domain transfer, we plot in Figure~\ref{fig:transfer} the aggregated results of OOD (out-of-domain) and ID (in-domain) performance. Corresponding to Table~\ref{tab:transfer_1}, OOD means, for example, pre-training with Arxiv and testing on knowledge graphs and molecules, while ID means testing on citation networks or web graphs. The fact that the OOD performance is so close to ID, compared with the best of baselines, confirms that our pre-training model and method deliver strong transfer. In particular, random walk patterns enable effective cross-domain generalization.

%------------------------------------------------------------------------------
\begin{table}[t]
  \caption{Transfer learning performance. \dag\ denotes the best performance among all (semi-)supervised methods in Table~\ref{tab:performance}; \ddag\ denotes the best performance among all self-supervised methods in Table~\ref{tab:performance}; * denotes pre-training \model with Arxiv + FB15k237 + ChEMBL.}
  \label{tab:transfer_1}
  \vskip 5pt
  \centering
  \small
  \sc
  \resizebox{\linewidth}{!}{
  \begin{tabular}{l|cccc|ccc|ccc}
    \toprule
        & Cora & Arxiv & WikiCS & (avg.) & WN18RR & FB15k237 & (avg.) & HIV & PCBA & (avg.) \\
        %Metric & Acc & Acc & Acc & & Acc & Acc & & AUC & AUC & \\
    \midrule
        (Semi-)Supervised$^\dag$ & 76.24 & 71.40 & 76.78 & 75.01 & 80.16 & 88.93 & 84.55 & 66.86 & 72.69 & 69.78 \\
        Self-Supervised$^\ddag$ & 75.13 & 71.19 & 77.61 & 74.81 & 84.36 & 87.45 & 85.91 & 65.44 & 67.09 & 66.27 \\
    \midrule
        Pre-Train w/ Arxiv & 76.63 & \ul{75.07} & 79.95 & 76.00 & \ul{94.62} & 94.08 & 94.35 & 73.46 & 79.69 & 76.58 \\
        Pre-Train w/ FB15k237 & 73.71 & 74.40 & 79.50 & 75.29 & 94.34 & 94.82 & 94.58 & 73.57 & 78.95 & 76.26 \\
        Pre-Train w/ ChEMBL & 70.34 & 74.80 & 79.27 & 74.28 & 93.97 & 93.46 & 93.72 & \ul{74.85} & 80.58 & \ul{77.72} \\
        Pre-Train w/ three$^*$ & \ul{78.69} & \ul{75.07} & \ul{80.15} & \ul{76.57} & 94.53 & \ul{95.16} & \ul{94.85} & 74.03 & \textbf{81.14} & 77.59 \\
        Pre-Train w/ all & \textbf{79.30} & \textbf{75.14} & \textbf{80.27} & \textbf{77.42} & \textbf{95.25} & \textbf{95.23} & \textbf{95.24} & \textbf{75.15} & \ul{81.03} & \textbf{78.09} \\
    \bottomrule
    \end{tabular}}
\end{table}

\begin{figure}[t]
\begin{minipage}{0.57\linewidth}
    \centering
    \includegraphics[width=\linewidth]{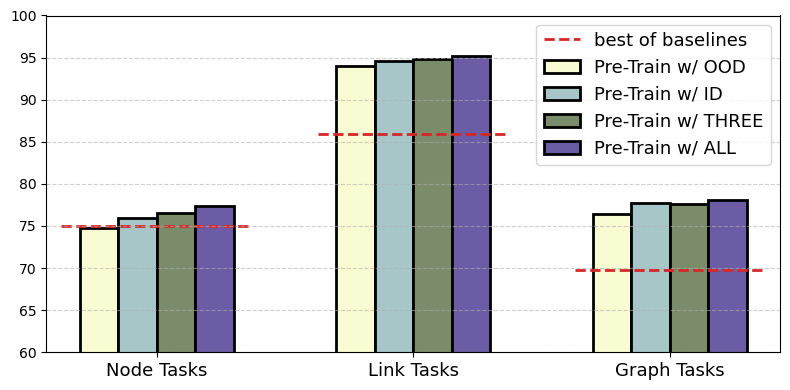}
\end{minipage}\hfill
\begin{minipage}{0.42\linewidth}
    \caption{
    Aggregated transfer learning performance. ``Best of baselines'' denotes the highest score among (semi-)supervised and self-supervised methods. ``OOD'' (resp. ``ID'') indicate that the datasets for pre-training and downstream testing are from the same (resp. different) domain. ``Pre-Train w/ THREE'' and ``Pre-Train w/ ALL'' follow the definitions in Table~\ref{tab:transfer_1}.
    }
    \label{fig:transfer}
\end{minipage}
%\vskip -5pt
\end{figure}

%------------------------------------------------------------------------------
\begin{table}[t]
  \caption{
  Few-shot learning performance. Results of BGRL, GraphMAE, GIANT, PRODIGY, OFA, and GFT are replicated from \cite{Wang2024}.
  }
  \label{tab:fewshot_1}
  \vskip 5pt
  \centering
  \small
  \sc
  \setlength{\tabcolsep}{4pt}
  \resizebox{\linewidth}{!}{
  \begin{tabular}{l|ccc|ccc|ccc|ccc}
    \toprule
        & \multicolumn{3}{c|}{Arxiv 5-way} & \multicolumn{3}{c|}{Arxiv 40-way} & \multicolumn{3}{c|}{FB15k237 10-way} & \multicolumn{3}{c}{FB15k237 40-way} \\
        & 1-shot & 3-shot & 5-shot & 1-shot & 3-shot & 5-shot & 1-shot & 3-shot & 5-shot & 1-shot & 3-shot & 5-shot \\
    \midrule
        MLP & 36.53 & 41.40 & 44.05 & 8.63 & 13.75 & 16.64 & 49.96 & 59.00 & 67.10 & 37.66 & 43.82 & 49.44 \\
        GCN & 52.60 & 57.8 & 64.52 & 19.97 & 29.89 & 34.82 & 69.14 & 85.02 & \ul{91.66} & 55.95 & 69.38 & 73.90 \\
        GAT & 46.60 & 64.08 & \textbf{75.10} & 23.68 & 33.51 & \ul{38.63} & 66.14 & 84.05 & 88.67 & 52.73 & 72.24 & 73.75 \\
        GIN & 31.28 & 40.30 & 42.20 & 12.80 & 13.75 & 14.52 & 76.16 & \ul{88.90} & 91.22 & 66.81 & \ul{77.61} & \ul{79.43} \\
    \midrule
        DGI & 40.07 & 46.73 & 50.67 & 11.75 & 15.06 & 18.24 & 70.93 & 72.37 & 85.47 & 59.37 & 63.41 & 66.68 \\
        BGRL & - & 48.43 & - & - & 17.98 & - & - & 67.23 & - & - & 29.24 & - \\
        GraphMAE & - & 49.24 & - & - & 19.12 & - & - & 69.75 & - & - & 32.07 & - \\
        GIANT & - & 54.33 & - & - & 20.12 & - & - & 77.21 & - & - & 52.63 & - \\
    \midrule
        PRODIGY & 48.23 & 58.64 & 61.09 & 21.44 & 23.69 & 25.51 & 66.10 & 79.61 & 84.30 & 54.30 & 59.58 & 62.03 \\
        OFA & \ul{52.80} & 58.68 & 59.92 & 21.34 & 22.13 & 24.01 & 83.46 & 83.14 & 83.64 & 63.48 & 65.76 & 66.51 \\
        GFT & \textbf{58.20} & \ul{66.00} & 68.00 & \ul{26.49} & \ul{34.36} & 36.29 & \ul{88.07} & 88.53 & 89.13 & \ul{74.97} & 74.56 & 75.01 \\
    \midrule
        \model & 52.76 & \textbf{72.06} & \ul{73.58} & \textbf{26.72} & \textbf{39.72} & \textbf{43.14} & \textbf{93.24} & \textbf{94.34} & \textbf{95.28} & \textbf{82.45} & \textbf{88.85} & \textbf{90.67} \\
    \bottomrule
    \end{tabular}}
\end{table}

%------------------------------------------------------------------------------
\textbf{Few-shot learning (label-level transfer).}
In this learning paradigm, a support set of $N$ classes with $k$ examples each is given and one is asked about the class of the query. Typically, $k$ is very small and the classes used for testing are not seen in training. Hence, few-shot learning tests two abilities: the ability to predict new labels and the ability to learn from very few examples.

In Table~\ref{tab:fewshot_1}, we fine-tune \model on a few select datasets, for each of which we conduct a few $N$-way $k$-shot experiments. (See Section~\ref{appendix:task_head} for fine-tuning details.) We compare \model with methods that train a separate model for each dataset and methods that (pre-)train a single model; i.e., foundation-model-style methods.
From Table~\ref{tab:fewshot_1}, we see that foundation-model-style methods outperform individually trained models, which is not surprising due to the limited training examples. Moreover, our model \model performs the best in nearly all cases, with the second best generally attained by GFT.

While it is uncommon to include supervised and self-supervised baselines in few-shot settings, we follow GFT \citep{Wang2024} by fine-tuning models on limited samples to enable this comparison. MLP, DGI, and our method share the same prediction head, differing only in whether using raw or LLM-generated features. GNN baselines rely on raw features but incorporate structural information via message passing. The results highlight that both feature quality and structural awareness are crucial for few-shot learning, supporting our claim that our model’s generated embeddings effectively integrate the graph structure and dataset context.

%------------------------------------------------------------------------------
\subsection{Ablation study}\label{sec:exp.ablation}

%------------------------------------------------------------------------------
\textbf{Comparison between random-walk sampling and neighborhood sampling.}
We motivate the use of random walks for representation learning in Section~\ref{sec:intro} with a few reasons. One of the reasons is that random walks can better cope with long-range interactions if they are important for some downstream tasks or datasets. Compared with neighborhood sampling~\citep{Hamilton2017}, multiple random walks equally retain sufficient near-hop neighbors while being able to extend very far.

To substantiate this argument, we perform an experiment to compare the two sampling approaches. We reuse the Transformer backbone but prepare the input sequences differently. For neighborhood sampling, we list all sampled nodes in a sequence in an arbitrary order. The ordering does not matter because we use the hop number as the positional encoding. Note that it is impossible to form edge features equivalent to the random-walk case because adjacent nodes in the sequence are not necessarily connected by an edge. Hence, we ignore edge features for neighborhood sampling. We compare the two approaches by using a similar sequence length.

From Table~\ref{tab:sampler} in Section~\ref{app:ablation.walk.vs.neighbor}, we do not see one setting that performs uniformly the best, but we note that using a walk length $\ell=8$ achieves the best result for the two datasets from the Long Range Graph Benchmark (Peptides-func and Peptides-struct). 
Moreover, we see that random-walk sampling generally achieves better results than neighborhood sampling across datasets. Ultimately, the best walk length is dataset-dependent, but random walks always offer an opportunity to capture a larger receptive field, if ever needed.

%------------------------------------------------------------------------------
\textbf{Comparison of training losses.}
We propose a new loss~\eqref{eqn:context.loss} in Section~\ref{sec:loss} to pre-train our Transformer. This pre-training loss contrasts the context of a node and those of other nodes. We compare this loss with other popular contrastive losses in the graph literature: DGI~\citep{Velickovic2019}, GraphPrompt~\citep{graphprompt}, and MaskGAE~\citep{maskgae}. In brief, DGI uses a random sequence as the negative context; GraphPrompt uses an InfoNCE-style of loss, which sums over all contexts in the denominator; and MaskGAE contrasts the next-hop contexts.
Additionally, we compare our loss with the mask-token prediction approach commonly used to pre-train LLM encoders. Specifically, we add a token (or position) reconstruction term to the loss.
See Sections~\ref{appendix:ctx_loss} and~\ref{appendix:mask_loss} for the mathematical details.

The results are reported in Tables~\ref{tab:ctx_loss_1} and~\ref{tab:mask_loss} of Sections~\ref{appendix:ctx_loss} and~\ref{appendix:mask_loss}, respectively. Compared with other contrastive losses, our loss achieves the best results on 5 out of 9 datasets. No other compared losses perform the best as many. Meanwhile, adding a reconstruction term flips four best cases to second best while flipping four second best to best. We conclude that our loss is more favorable than other contrastive losses and adding a reconstruction term is not beneficial.

%------------------------------------------------------------------------------
\section{Conclusions}\label{sec:conclusion}
We have developed a Transformer approach toward building a foundation model for graphs. Central to this approach is the use of random walks as input sequences and the setup of context prediction for self-supervised learning, allowing pre-training with multiple datasets from different domains. The resulting model, \model, can be adapted to a new graph and task in a new domain. With comprehensive experiments, we show that \model is as competitive as graph models trained on a single dataset and those trained on multiple datasets in a supervised manner.

\textbf{Limitations.} While \model shows promise in various aspects of a foundation model (e.g., diverse pre-training data, downstream adaptation, and out-of-dataset and domain transfer), as a research prototype it has not reached the scale of the counterpart models for natural language data, which have several to several hundred billion parameters and consume billions to trillions of training tokens. Nevertheless, this study paves the way toward matching the scale of successful LLMs.

\textbf{Broader impacts.} Graphs as a unique kind of data embedded in every aspect of lives, ranging from mathematical reasoning to social bonding. Foundation models for graphs are complementary to those for natural languages and their technological advancement benefits the human society.

%------------------------------------------------------------------------------
\newpage
\bibliography{reference}

\begin{thebibliography}{10}

\bibitem{Battaglia2018}
Peter~W. Battaglia, Jessica~B. Hamrick, Victor Bapst, Alvaro Sanchez-Gonzalez, Vinicius Zambaldi, Mateusz Malinowski, Andrea Tacchetti, David Raposo, Adam Santoro, Ryan Faulkner, Caglar Gulcehre, Francis Song, Andrew Ballard, Justin Gilmer, George Dahl, Ashish Vaswani, Kelsey Allen, Charles Nash, Victoria Langston, Chris Dyer, Nicolas Heess, Daan Wierstra, Pushmeet Kohli, Matt Botvinick, Oriol Vinyals, Yujia Li, and Razvan Pascanu.
\newblock Relational inductive biases, deep learning, and graph networks.
\newblock {\em arXiv preprint arXiv:1806.01261}, 2018.

\bibitem{blum2015foundations}
Avrim Blum, John Hopcroft, and Ravindran Kannan.
\newblock Foundations of data science.
\newblock {\em To appear}, 280, 2015.

\bibitem{Bommasani2021}
Rishi Bommasani et~al.
\newblock On the opportunities and risks of foundation models.
\newblock Preprint arXiv:2108.07258, 2021.

\bibitem{fb15k237}
Antoine Bordes, Nicolas Usunier, Alberto Garcia-Duran, Jason Weston, and Oksana Yakhnenko.
\newblock Translating embeddings for modeling multi-relational data.
\newblock {\em Advances in neural information processing systems}, 26, 2013.

\bibitem{spkernels}
Karsten~M Borgwardt and Hans-Peter Kriegel.
\newblock Shortest-path kernels on graphs.
\newblock In {\em Fifth IEEE international conference on data mining (ICDM'05)}, pages 8--pp. IEEE, 2005.

\bibitem{graphllm}
Ziwei Chai, Tianjie Zhang, Liang Wu, Kaiqiao Han, Xiaohai Hu, Xuanwen Huang, and Yang Yang.
\newblock Graphllm: Boosting graph reasoning ability of large language model.
\newblock {\em arXiv preprint arXiv:2310.05845}, 2023.

\bibitem{Chen2022}
Dexiong Chen, Leslie O'Bray, and Karsten Borgwardt.
\newblock Structure-aware transformer for graph representation learning.
\newblock In {\em ICML}, 2022.

\bibitem{Chen2024}
Dexiong Chen, Till~Hendrik Schulz, and Karsten Borgwardt.
\newblock Learning long range dependencies on graphs via random walks.
\newblock Preprint arXiv:2406.03386, 2024.

\bibitem{Chen2018}
Jie Chen, Tengfei Ma, and Cao Xiao.
\newblock {FastGCN}: Fast learning with graph convolutional networks via importance sampling.
\newblock In {\em ICLR}, 2018.

\bibitem{Chen2023}
Jinsong Chen, Kaiyuan Gao, Gaichao Li, and Kun He.
\newblock {NAGphormer}: A tokenized graph transformer for node classification in large graphs.
\newblock In {\em ICLR}, 2023.

\bibitem{Chen2020}
Ming Chen, Zhewei Wei, Zengfeng Huang, Bolin Ding, and Yaliang Li.
\newblock Simple and deep graph convolutional networks.
\newblock In {\em ICML}, 2020.

\bibitem{llaga}
Runjin Chen, Tong Zhao, Ajay Jaiswal, Neil Shah, and Zhangyang Wang.
\newblock Llaga: Large language and graph assistant.
\newblock {\em arXiv preprint arXiv:2402.08170}, 2024.

\bibitem{Chen2023a}
Zhikai Chen, Haitao Mao, Hang Li, Wei Jin, Hongzhi Wen, Xiaochi Wei, Shuaiqiang Wang, Dawei Yin, Wenqi Fan, Hui Liu, and Jiliang Tang.
\newblock Exploring the potential of large language models ({LLM}s) in learning on graph.
\newblock In {\em NeurIPS Workshop}, 2023.

\bibitem{Chien2022}
Eli Chien, Wei-Cheng Chang, Cho-Jui Hsieh, Hsiang-Fu Yu, Jiong Zhang, Olgica Milenkovic, and Inderjit~S Dhillon.
\newblock Node feature extraction by self-supervised multi-scale neighborhood prediction.
\newblock In {\em ICLR}, 2022.

\bibitem{Choromanski2021}
Krzysztof Choromanski, Valerii Likhosherstov, David Dohan, Xingyou Song, Andreea Gane, Tamas Sarlos, Peter Hawkins, Jared Davis, Afroz Mohiuddin, Lukasz Kaiser, David Belanger, Lucy Colwell, and Adrian Weller.
\newblock Rethinking attention with performers.
\newblock In {\em ICLR}, 2021.

\bibitem{wn18rr}
Tim Dettmers, Pasquale Minervini, Pontus Stenetorp, and Sebastian Riedel.
\newblock Convolutional 2d knowledge graph embeddings.
\newblock In {\em Proceedings of the AAAI conference on artificial intelligence}, volume~32, 2018.

\bibitem{Dwivedi2021}
Vijay~Prakash Dwivedi and Xavier Bresson.
\newblock A generalization of transformer networks to graphs.
\newblock In {\em AAAI Workshop on Deep Learning on Graphs: Methods and Applications}, 2021.

\bibitem{Dwivedi2022}
Vijay~Prakash Dwivedi, Ladislav Rampášek, Mikhail Galkin, Ali Parviz, Guy Wolf, Anh~Tuan Luu, and Dominique Beaini.
\newblock Long range graph benchmark.
\newblock In {\em NeurIPS Track on D\&B}, 2022.

\bibitem{Fatemi2024}
Bahare Fatemi, Jonathan Halcrow, and Bryan Perozzi.
\newblock Talk like a graph: Encoding graphs for large language models.
\newblock In {\em ICLR}, 2024.

\bibitem{galkin2023towards}
Mikhail Galkin, Xinyu Yuan, Hesham Mostafa, Jian Tang, and Zhaocheng Zhu.
\newblock Towards foundation models for knowledge graph reasoning.
\newblock {\em arXiv preprint arXiv:2310.04562}, 2023.

\bibitem{Gasteiger2019}
Johannes Gasteiger, Aleksandar Bojchevski, and Stephan G\"{u}nnemann.
\newblock Predict then propagate: Graph neural networks meet personalized pagerank.
\newblock In {\em ICLR}, 2019.

\bibitem{chembl}
Anna Gaulton, Louisa~J Bellis, A~Patricia Bento, Jon Chambers, Mark Davies, Anne Hersey, Yvonne Light, Shaun McGlinchey, David Michalovich, Bissan Al-Lazikani, et~al.
\newblock Chembl: a large-scale bioactivity database for drug discovery.
\newblock {\em Nucleic acids research}, 40(D1):D1100--D1107, 2012.

\bibitem{citeseer}
C~Lee Giles, Kurt~D Bollacker, and Steve Lawrence.
\newblock Citeseer: An automatic citation indexing system.
\newblock In {\em Proceedings of the third ACM conference on Digital libraries}, pages 89--98, 1998.

\bibitem{Gilmer2017}
Justin Gilmer, Samuel~S. Schoenholz, Patrick~F. Riley, Oriol Vinyals, and George~E. Dahl.
\newblock Neural message passing for quantum chemistry.
\newblock In {\em ICML}, 2017.

\bibitem{Grover2016}
Aditya Grover and Jure Leskovec.
\newblock node2vec: Scalable feature learning for networks.
\newblock In {\em KDD}, 2016.

\bibitem{Guo2023}
Jiayan Guo, Lun Du, Hengyu Liu, Mengyu Zhou, Xinyi He, and Shi Han.
\newblock {GPT4Graph}: Can large language models understand graph structured data ? an empirical evaluation and benchmarking.
\newblock Preprint arXiv:2305.15066, 2023.

\bibitem{Gutmann2012}
Michael~U. Gutmann and Aapo Hyv\"{a}rinen.
\newblock Noise-contrastive estimation of unnormalized statistical models, with applications to natural image statistics.
\newblock {\em JMLT}, 13(11):307--361, 2012.

\bibitem{Hamilton2017}
William~L. Hamilton, Rex Ying, and Jure Leskovec.
\newblock Inductive representation learning on large graphs.
\newblock In {\em NIPS}, 2017.

\bibitem{He2023}
Xiaoxin He, Xavier Bresson, Thomas Laurent, and Bryan Hooi.
\newblock Explanations as features: {LLM}-based features for text-attributed graphs.
\newblock Preprint arXiv:2305.19523, 2023.

\bibitem{he2023harnessing}
Xiaoxin He, Xavier Bresson, Thomas Laurent, Adam Perold, Yann LeCun, and Bryan Hooi.
\newblock Harnessing explanations: Llm-to-lm interpreter for enhanced text-attributed graph representation learning.
\newblock {\em arXiv preprint arXiv:2305.19523}, 2023.

\bibitem{unigraph}
Yufei He and Bryan Hooi.
\newblock Unigraph: Learning a cross-domain graph foundation model from natural language.
\newblock {\em arXiv e-prints}, pages arXiv--2402, 2024.

\bibitem{Hjelm2019}
R~Devon Hjelm, Alex Fedorov, Samuel Lavoie-Marchildon, Karan Grewal, Phil Bachman, Adam Trischler, and Yoshua Bengio.
\newblock Learning deep representations by mutual information estimation and maximization.
\newblock In {\em ICLR}, 2019.

\bibitem{Hoffmann2022}
Jordan Hoffmann, Sebastian Borgeaud, Arthur Mensch, Elena Buchatskaya, Trevor Cai, Eliza Rutherford, Diego de~Las~Casas, Lisa~Anne Hendricks, Johannes Welbl, Aidan Clark, Tom Hennigan, Eric Noland, Katie Millican, George van~den Driessche, Bogdan Damoc, Aurelia Guy, Simon Osindero, Karen Simonyan, Erich Elsen, Jack~W. Rae, Oriol Vinyals, and Laurent Sifre.
\newblock Training compute-optimal large language models.
\newblock Preprint arXiv:2203.15556, 2022.

\bibitem{Hou2022}
Zhenyu Hou, Xiao Liu, Yukuo Cen, Yuxiao Dong, Hongxia Yang, Chunjie Wang, and Jie Tang.
\newblock {GraphMAE}: Self-supervised masked graph autoencoders.
\newblock In {\em KDD}, 2022.

\bibitem{Hu2020a}
Weihua Hu, Matthias Fey, Marinka Zitnik, Yuxiao Dong, Hongyu Ren, Bowen Liu, Michele Catasta, and Jure Leskovec.
\newblock Open graph benchmark: Datasets for machine learning on graphs.
\newblock Preprint arXiv:2005.00687, 2020.

\bibitem{Hu2020}
Weihua Hu, Bowen Liu, Joseph Gomes, Marinka Zitnik, Percy Liang, Vijay Pande, and Jure Leskovec.
\newblock Strategies for pre-training graph neural networks.
\newblock In {\em ICLR}, 2020.

\bibitem{Huang2023}
Qian Huang, Hongyu Ren, Peng Chen, Gregor Kr\v{z}manc, Daniel Zeng, Percy Liang, and Jure Leskovec.
\newblock {PRODIGY}: Enabling in-context learning over graphs.
\newblock In {\em NeurIPS}, 2023.

\bibitem{Kaplan2020}
Jared Kaplan, Sam McCandlish, Tom Henighan, Tom~B. Brown, Benjamin Chess, Rewon Child, Scott Gray, Alec Radford, Jeffrey Wu, and Dario Amodei.
\newblock Scaling laws for neural language models.
\newblock Preprint arXiv:2001.08361, 2020.

\bibitem{bert}
Jacob Devlin Ming-Wei~Chang Kenton and Lee~Kristina Toutanova.
\newblock Bert: Pre-training of deep bidirectional transformers for language understanding.
\newblock In {\em Proceedings of naacL-HLT}, volume~1. Minneapolis, Minnesota, 2019.

\bibitem{Kipf2017}
Thomas~N. Kipf and Max Welling.
\newblock Semi-supervised classification with graph convolutional networks.
\newblock In {\em ICLR}, 2017.

\bibitem{Kreuzer2021}
Devin Kreuzer, Dominique Beaini, William~L. Hamilton, Vincent L\'{e}tourneau, and Prudencio Tossou.
\newblock Rethinking graph transformers with spectral attention.
\newblock In {\em NeurIPS}, 2021.

\bibitem{maskgae}
Jintang Li, Ruofan Wu, Wangbin Sun, Liang Chen, Sheng Tian, Liang Zhu, Changhua Meng, Zibin Zheng, and Weiqiang Wang.
\newblock What's behind the mask: Understanding masked graph modeling for graph autoencoders.
\newblock In {\em Proceedings of the 29th ACM SIGKDD Conference on Knowledge Discovery and Data Mining}, pages 1268--1279, 2023.

\bibitem{Li2018}
Qimai Li, Zhichao Han, and Xiao-Ming Wu.
\newblock Deeper insights into graph convolutional networks for semi-supervised learning.
\newblock In {\em AAAI}, 2018.

\bibitem{Li2016}
Yujia Li, Daniel Tarlow, Marc Brockschmidt, and Richard Zemel.
\newblock Gated graph sequence neural networks.
\newblock In {\em ICLR}, 2016.

\bibitem{Liu2024}
Hao Liu, Jiarui Feng, Lecheng Kong, Ningyue Liang, Dacheng Tao, Yixin Chen, and Muhan Zhang.
\newblock One for all: Towards training one graph model for all classification tasks.
\newblock In {\em ICLR}, 2024.

\bibitem{liu2023towards}
Jiawei Liu, Cheng Yang, Zhiyuan Lu, Junze Chen, Yibo Li, Mengmei Zhang, Ting Bai, Yuan Fang, Lichao Sun, Philip~S Yu, et~al.
\newblock Towards graph foundation models: A survey and beyond.
\newblock {\em arXiv preprint arXiv:2310.11829}, 2023.

\bibitem{Liu2018}
Peter~J Liu, Mohammad Saleh, Etienne Pot, Ben Goodrich, Ryan Sepassi, Lukasz Kaiser, and Noam Shazeer.
\newblock Generating {W}ikipedia by summarizing long sequences.
\newblock Preprint arXiv:1801.10198, 2018.

\bibitem{graphprompt}
Zemin Liu, Xingtong Yu, Yuan Fang, and Xinming Zhang.
\newblock Graphprompt: Unifying pre-training and downstream tasks for graph neural networks.
\newblock In {\em Proceedings of the ACM Web Conference 2023}, pages 417--428, 2023.

\bibitem{Liu2024a}
Zheyuan Liu, Xiaoxin He, Yijun Tian, and Nitesh~V. Chawla.
\newblock Can we soft prompt llms for graph learning tasks?
\newblock In {\em WWW}, 2024.

\bibitem{adamw}
I~Loshchilov.
\newblock Decoupled weight decay regularization.
\newblock {\em arXiv preprint arXiv:1711.05101}, 2017.

\bibitem{Ma2023}
Liheng Ma, Chen Lin, Derek Lim, Adriana Romero-Soriano, Puneet~K. Dokania, Mark Coates, Philip Torr, and Ser-Nam Lim.
\newblock Graph inductive biases in transformers without message passing.
\newblock In {\em ICML}, 2023.

\bibitem{Mao2024}
Haitao Mao, Zhikai Chen, Wenzhuo Tang, Jianan Zhao, Yao Ma, Tong Zhao, Neil Shah, Mikhail Galkin, and Jiliang Tang.
\newblock Position: Graph foundation models are already here.
\newblock In {\em ICML}, 2024.

\bibitem{mathieu2013graph}
Claire Mathieu and Hang Zhou.
\newblock Graph reconstruction via distance oracles.
\newblock In {\em International Colloquium on Automata, Languages, and Programming}, pages 733--744. Springer, 2013.

\bibitem{cora}
Andrew~Kachites McCallum, Kamal Nigam, Jason Rennie, and Kristie Seymore.
\newblock Automating the construction of internet portals with machine learning.
\newblock {\em Information Retrieval}, 3:127--163, 2000.

\bibitem{wikics}
P{\'e}ter Mernyei and C{\u{a}}t{\u{a}}lina Cangea.
\newblock Wiki-cs: A wikipedia-based benchmark for graph neural networks.
\newblock {\em arXiv preprint arXiv:2007.02901}, 2020.

\bibitem{mohammadshahi2019graph}
Alireza Mohammadshahi and James Henderson.
\newblock Graph-to-graph transformer for transition-based dependency parsing.
\newblock {\em arXiv preprint arXiv:1911.03561}, 2019.

\bibitem{mohammadshahi2021recursive}
Alireza Mohammadshahi and James Henderson.
\newblock Recursive non-autoregressive graph-to-graph transformer for dependency parsing with iterative refinement.
\newblock {\em Transactions of the Association for Computational Linguistics}, 9:120--138, 2021.

\bibitem{Perozzi2014}
Bryan Perozzi, Rami Al-Rfou, and Steven Skiena.
\newblock {DeepWalk}: Online learning of social representations.
\newblock In {\em KDD}, 2014.

\bibitem{qiu2020gcc}
Jiezhong Qiu, Qibin Chen, Yuxiao Dong, Jing Zhang, Hongxia Yang, Ming Ding, Kuansan Wang, and Jie Tang.
\newblock Gcc: Graph contrastive coding for graph neural network pre-training.
\newblock In {\em Proceedings of the 26th ACM SIGKDD international conference on knowledge discovery \& data mining}, pages 1150--1160, 2020.

\bibitem{gpt2}
Alec Radford.
\newblock Improving language understanding by generative pre-training.
\newblock {\em OpenAI}, 2018.

\bibitem{Rampasek2022}
Ladislav Rampášek, Mikhail Galkin, Vijay~Prakash Dwivedi, Anh~Tuan Luu, Guy Wolf, and Dominique Beaini.
\newblock Recipe for a general, powerful, scalable graph transformer.
\newblock In {\em NeurIPS}, 2022.

\bibitem{reyzin2007learning}
Lev Reyzin and Nikhil Srivastava.
\newblock Learning and verifying graphs using queries with a focus on edge counting.
\newblock In {\em International Conference on Algorithmic Learning Theory}, pages 285--297. Springer, 2007.

\bibitem{Rong2020}
Yu~Rong, Wenbing Huang, Tingyang Xu, and Junzhou Huang.
\newblock {DropEdge}: Towards deep graph convolutional networks on node classification.
\newblock In {\em ICLR}, 2020.

\bibitem{Sun2022}
Mingchen Sun, Kaixiong Zhou, Xin He, Ying Wang, and Xin Wang.
\newblock {GPPT}: Graph pre-training and prompt tuning to generalize graph neural networks.
\newblock In {\em KDD}, 2022.

\bibitem{Sun2023}
Xiangguo Sun, Hong Cheng, Jia Li, Bo~Liu, and Jihong Guan.
\newblock All in one: Multi-task prompting for graph neural networks.
\newblock In {\em KDD}, 2023.

\bibitem{walklm}
Yanchao Tan, Zihao Zhou, Hang Lv, Weiming Liu, and Carl Yang.
\newblock Walklm: A uniform language model fine-tuning framework for attributed graph embedding.
\newblock {\em Advances in neural information processing systems}, 36:13308--13325, 2023.

\bibitem{graphgpt}
Jiabin Tang, Yuhao Yang, Wei Wei, Lei Shi, Lixin Su, Suqi Cheng, Dawei Yin, and Chao Huang.
\newblock Graphgpt: Graph instruction tuning for large language models.
\newblock In {\em Proceedings of the 47th International ACM SIGIR Conference on Research and Development in Information Retrieval}, pages 491--500, 2024.

\bibitem{Thakoor2022}
Shantanu Thakoor, Corentin Tallec, Mohammad~Gheshlaghi Azar, Mehdi Azabou, Eva~L. Dyer, R\'{e}mi Munos, Petar Veli\v{c}kovi\'{c}, and Michal Valko.
\newblock Large-scale representation learning on graphs via bootstrapping.
\newblock In {\em ICLR}, 2022.

\bibitem{Tian2024}
Yijun Tian, Huan Song, Zichen Wang, Haozhu Wang, Ziqing Hu, Fang Wang, Nitesh~V. Chawla, and Panpan Xu.
\newblock Graph neural prompting with large language models.
\newblock In {\em AAAI}, 2024.

\bibitem{touvron2023llama}
Hugo Touvron, Louis Martin, Kevin Stone, Peter Albert, Amjad Almahairi, Yasmine Babaei, Nikolay Bashlykov, Soumya Batra, Prajjwal Bhargava, Shruti Bhosale, et~al.
\newblock Llama 2: Open foundation and fine-tuned chat models.
\newblock {\em arXiv preprint arXiv:2307.09288}, 2023.

\bibitem{Vaswani2017}
Ashish Vaswani, Noam Shazeer, Niki Parmar, Jakob Uszkoreit, Llion Jones, Aidan~N. Gomez, Lukasz Kaiser, and Illia Polosukhin.
\newblock Attention is all you need.
\newblock In {\em NIPS}, 2017.

\bibitem{Velickovic2018}
Petar Veli\v{c}kovi\'{c}, Guillem Cucurull, Arantxa Casanova, Adriana Romero, Pietro Li\`{o}, and Yoshua Bengio.
\newblock Graph attention networks.
\newblock In {\em ICLR}, 2018.

\bibitem{Velickovic2019}
Petar Veli\v{c}kovi\'{c}, William Fedus, William~L. Hamilton, Pietro Li\`{o}, Yoshua Bengio, and R~Devon Hjelm.
\newblock Deep graph infomax.
\newblock In {\em ICLR}, 2019.

\bibitem{Wang2023}
Heng Wang, Shangbin Feng, Tianxing He, Zhaoxuan Tan, Xiaochuang Han, and Yulia Tsvetkov.
\newblock Can language models solve graph problems in natural language?
\newblock In {\em NeurIPS}, 2023.

\bibitem{Wang2024}
Zehong Wang, Zheyuan Zhang, Nitesh~V Chawla, Chuxu Zhang, and Yanfang Ye.
\newblock {GFT}: Graph foundation model with transferable tree vocabulary.
\newblock In {\em NeurIPS}, 2024.

\bibitem{Wei2022}
Jason Wei, Yi~Tay, Rishi Bommasani, Colin Raffel, Barret Zoph, Sebastian Borgeaud, Dani Yogatama, Maarten Bosma, Denny Zhou, Donald Metzler, Ed~H. Chi, Tatsunori Hashimoto, Oriol Vinyals, Percy Liang, Jeff Dean, and William Fedus.
\newblock Emergent abilities of large language models.
\newblock In {\em TMLR}, 2022.

\bibitem{Wu2022}
Qitian Wu, Wentao Zhao, Zenan Li, David Wipf, and Junchi Yan.
\newblock {NodeFormer}: A scalable graph structure learning transformer for node classification.
\newblock In {\em NeurIPS}, 2022.

\bibitem{Wu2021}
Zhanghao Wu, Paras Jain, Matthew~A. Wright, Azalia Mirhoseini, Joseph~E. Gonzalez, and Ion Stoica.
\newblock Representing long-range context for graph neural networks with global attention.
\newblock In {\em NeurIPS}, 2021.

\bibitem{moleculenet}
Zhenqin Wu, Bharath Ramsundar, Evan~N Feinberg, Joseph Gomes, Caleb Geniesse, Aneesh~S Pappu, Karl Leswing, and Vijay Pande.
\newblock Moleculenet: a benchmark for molecular machine learning.
\newblock {\em Chemical science}, 9(2):513--530, 2018.

\bibitem{xia2024opengraph}
Lianghao Xia, Ben Kao, and Chao Huang.
\newblock Opengraph: Towards open graph foundation models.
\newblock {\em arXiv preprint arXiv:2403.01121}, 2024.

\bibitem{Xu2019}
Keyulu Xu, Weihua Hu, Jure Leskovec, and Stefanie Jegelka.
\newblock How powerful are graph neural networks?
\newblock In {\em ICLR}, 2019.

\bibitem{yang2016revisiting}
Zhilin Yang, William Cohen, and Ruslan Salakhudinov.
\newblock Revisiting semi-supervised learning with graph embeddings.
\newblock In {\em International conference on machine learning}, pages 40--48. PMLR, 2016.

\bibitem{Ying2021}
Chengxuan Ying, Tianle Cai, Shengjie Luo, Shuxin Zheng, Guolin Ke, Di~He, Yanming Shen, and Tie-Yan Liu.
\newblock Do transformers really perform bad for graph representation?
\newblock In {\em NeurIPS}, 2021.

\bibitem{You2020}
Yuning You, Tianlong Chen, Yongduo Sui, Ting Chen, Zhangyang Wang, and Yang Shen.
\newblock Graph contrastive learning with augmentations.
\newblock In {\em NeurIPS}, 2020.

\bibitem{Zhang2022}
Zaixi Zhang, Qi~Liu, Qingyong Hu, and Chee-Kong Lee.
\newblock Hierarchical graph transformer with adaptive node sampling.
\newblock In {\em NeurIPS}, 2022.

\bibitem{zhao2024all}
Haihong Zhao, Aochuan Chen, Xiangguo Sun, Hong Cheng, and Jia Li.
\newblock All in one and one for all: A simple yet effective method towards cross-domain graph pretraining.
\newblock In {\em Proceedings of the 30th ACM SIGKDD Conference on Knowledge Discovery and Data Mining}, pages 4443--4454, 2024.

\bibitem{Zhao2021}
Jianan Zhao, Chaozhuo Li, Qianlong Wen, Yiqi Wang, Yuming Liu, Hao Sun, Xing Xie, and Yanfang Ye.
\newblock Gophormer: Ego-graph transformer for node classification.
\newblock Preprint arXiv:2110.13094, 2021.

\bibitem{graphtext}
Jianan Zhao, Le~Zhuo, Yikang Shen, Meng Qu, Kai Liu, Michael Bronstein, Zhaocheng Zhu, and Jian Tang.
\newblock Graphtext: Graph reasoning in text space.
\newblock {\em arXiv preprint arXiv:2310.01089}, 2023.

\bibitem{Zhu2023}
Wenhao Zhu, Tianyu Wen, Guojie Song, Xiaojun Ma, and Liang Wang.
\newblock Hierarchical transformer for scalable graph learning.
\newblock In {\em IJCAI}, 2023.

\bibitem{Zhu2020}
Yanqiao Zhu, Yichen Xu, Feng Yu, Qiang Liu, Shu Wu, and Liang Wang.
\newblock Deep graph contrastive representation learning.
\newblock Preprint arXiv:2006.04131, 2020.

\bibitem{Zhu2021}
Yanqiao Zhu, Yichen Xu, Feng Yu, Qiang Liu, Shu Wu, and Liang Wang.
\newblock Graph contrastive learning with adaptive augmentation.
\newblock In {\em WWW}, 2021.

\end{thebibliography}
\bibliographystyle{plain}

%%%%%%%%%%%%%%%%%%%%%%%%%%%%%%%%%%%%%%%%%%%%%%%%%%%%%%%%%%%%
\newpage
\appendix
\onecolumn
\tableofcontents
\part*{Appendix}

%------------------------------------------------------------------------------
\section{More related work}\label{appendix:rw}

\textbf{Graph transformers.}
In addition to the GTs predominantly appearing in the graph literature (Section \ref{sec:intro}), the NLP community also explored Transformer-based architectures for structured data. 
G2GTr \citep{mohammadshahi2019graph} and RNGTr \citep{mohammadshahi2021recursive} adapt Transformers to handle graph-structured input and output in dependency parsing tasks. 
Although not originally designed for general graphs, these models incorporate structure-aware mechanisms that capture local connectivity within sequential input, typically consisting of redundant neighborhood nodes. Our method shares a similar intuition with these methods.

\textbf{Pre-training GNNs.}
Pre-training is essential to LLMs and to a degree, GNNs. An early GNN pre-training study~\citep{Hu2020} develops a multitude of loss terms that, when invoked together, incur positive transfer and significantly improve task performance, while naive pre-training methods incur negative transfer. However, some of the loss terms exploit task labels (e.g., graph property prediction). Nevertheless, another loss term---context prediction---is particularly useful and our method is inspired by this idea. Additionally, a proliferation of self-supervised learning methods were developed for GNNs in the past~\citep{Velickovic2019,Zhu2020,You2020,Zhu2021,Thakoor2022,Hou2022,Chien2022}. They could be used for pre-training and we compare some of them in the experiments.

\textbf{Foundation model concepts in GNNs.}
LLMs bring in new concepts that motivate the modern development of GNNs, even if the objective is not a foundation model. Prompting is one example. 
GPPT~\citep{Sun2022} pre-trains a usual GNN and introduces a prompting function to adapt the GNN to produce (node, label) embedding pairs for node classification. 
GraphPrompt~\citep{graphprompt} incorporates learnable prompts and a task-unified objective to bridge the gap between pre-training and fine-tuning in GNNs, enabling the model to extract task-relevant information from a pre-trained model. 
PRODIGY~\citep{Huang2023} focuses on the few-shot learning setup, which builds a graph that connects a few prompt examples (the shots), the query, and the task labels for classifying the query.
ULTRA~\citep{galkin2023towards}, designed for knowledge graphs, leverages relative entity and relation embeddings for inductive and transductive reasoning. In contrast, our approach does not pre-train on relations in link-prediction datasets, because relations are inherently tied to the link-prediction downstream task. Instead, we concatenate head and tail node embeddings to predict the relation—a simpler yet effective strategy.

Several recent works aim to unify the feature space across graph datasets. 
All-in-One~\citep{Sun2023} trains prompt tokens derived from subgraphs and inserts them into node representations, enabling a pre-trained GNN to adapt to multiple tasks. 
Zhao et al.~\citep{zhao2024all} propose a prompt-based method for cross-domain classification, where features are aligned through singular value decompositions. 
OpenGraph~\citep{xia2024opengraph} introduces a topology-aware graph tokenizer that converts graphs into sequences of unified embeddings, which are then processed by a scalable graph transformer. In contrast, our model aligns datasets using a pre-trained LLM without requiring a dataset-specific tokenizer—though such tokenizers remain compatible for task-specific adaptations.
For text-attributed graphs, OFA~\citep{Liu2024}, GFT~\citep{Wang2024}, and UniGraph~\citep{unigraph} leverage pre-trained LLMs to unify textual features. We adopt a similar strategy to bridge cross-domain graph representations. 
Additionally, inspired by GCC~\citep{qiu2020gcc}, we incorporate contrastive learning on subgraph patterns to enhance cross-task discrimination.

\textbf{LLM-based methods.}
Several LLM-centric approaches have been proposed for graph tasks, where graph data is converted into text-compatible formats that LLMs can process. 
GraphPrompter~\citep{Liu2024a} and GNP~\citep{Tian2024} integrate GNN embeddings as soft prompts into text tokens, enabling LLMs to handle graph-structured inputs. 
Other studies~\citep{Wang2023, Fatemi2024} show that LLMs can capture structural properties through natural language descriptions of nodes and their connections. 
GPT4Graph~\citep{Guo2023} assesses the ability of LLMs to understand and reason over structured graph data. 
GraphLLM~\citep{graphllm} augments LLMs with a graph transformer to improve graph reasoning. 
GraphText~\citep{graphtext} transforms graph data into natural language sequences using graph-syntax trees, allowing in-context reasoning without fine-tuning. 
GraphGPT~\citep{graphgpt} aligns graph and textual representations via dual-stage instruction tuning, improving both supervised and zero-shot performance. 
LLaGA~\citep{llaga} translates graphs into LLM-readable formats to support prompt-based task execution.
In contrast, our model employs a lightweight task-specific prediction head without fine-tuning the LLM, resulting in a compact encoder comparable in size to GPT2-124M. 
Similarly, WalkLM~\citep{walklm} uses textual random walks to construct LLM input and contrastive pre-training with task-specific heads; however, it does not support joint training across domains.

Readers can refer to \cite{liu2023towards} for a more detailed survey on graph foundation models.

%------------------------------------------------------------------------------
\section{Downstream adaptation}
\label{appendix:task_head}
The pre-trained model needs to be adapted to perform a downstream task. In this work, we take a simple form of adaptation by using task heads. Specifically, a task head is a neural network that takes the pre-trained model output as input and performs the downstream task. Because we perform evaluation in multiple scenarios (cross-domain cross-task learning, transfer learning, and few-shot learning), the task heads include some of the following components: encoder, decoder, and scorer.
\begin{itemize}[leftmargin=*]
\item The \textbf{encoder} maps a generic node representation (the pre-trained model output $\vh_0$ in~\eqref{eq:output}) to a task-specific node representation. The encoder is an MLP, where each layer consists of four components: linear transformation, batch normalization, ReLU activation, and dropout. 
For node-level tasks, the input consists of node representations. For link-level tasks, the input is the concatenation of the representations of the two nodes forming the edge. For graph-level tasks, the input graph representations is computed by aggregating node representations across the entire graph. Hyperparameters are detailed in Section~\ref{appendix:exp.details}.

\item The \textbf{decoder} maps task-specific node representations to task targets.
Note that concatenation for link tasks and pooling for graph tasks are performed within the encoder. Therefore, the decoder is always a single linear layer, where the inputs are the outputs of the encoder. The output dimension of the decoder corresponds to the number of classes for classification and a single digit for regression.

\item The \textbf{scorer} is an unparameterized module responsible to compare two representations. In few-shot learning, this scorer will compare the query-node representation and the class representation. We use the cosine similarity for comparison and apply a softmax on the vector of similarities for label prediction. The class representation is computed as the average of the examples in the support set.
\end{itemize}

For the cross-domain cross-task learning and transfer learning, the task head is the encoder--decoder pair. Each task head is trained by using the training set of the corresponding task with labels.

For few-shot learning, we first train the encoder--decoder pair by using very few examples. Then, we use the trained encoder and pair it with the scorer for prediction. It was suggested that this approach is better than directly training the encoder--scorer pair~\citep{Huang2023,Wang2024} and we confirm so in preliminary tests.

%------------------------------------------------------------------------------
\section{Proofs for Section~\ref{sec:thm}}\label{appendix:proof}

\subsection{Proof of Theorem \ref{thm:reconstruct}}\label{proof:reconstruct}

\begin{lemma}
\label{lemma:walk_length}
    The expected walk length for reaching the farthest node in $B_{u,r}$ is $\expect[\ell] = O(r)$ when
    \begin{equation}
    \label{eq:lemma_walk_length}
        \frac{1}{q} \geq \frac{C+1}{C-1}\cdot\frac{1}{p} + \frac{1}{C-1} d,
    \end{equation}
    for some constant $C \geq 2$ and $C \ll r$.
\end{lemma}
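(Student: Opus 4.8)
The plan is to track the shortest-path distance of the walk from the root and show that, under hypothesis~\eqref{eq:lemma_walk_length}, this distance makes a constant amount of outward progress per step in expectation, so that a node at distance $r$ (a farthest node of $\tB_{u,r}$) is first visited after $O(r)$ expected steps. Write $X_t := \psi(u, i_t)$ for the distance of the walk at step $t$, so $X_0 = 0$ and $\ell = \inf\{t : X_t = r\}$. I would then either convert a one-step drift estimate into a hitting-time bound via optional stopping on a suitable submartingale, or decompose $\ell = \sum_{j=1}^{r}\tau_j$, where $\tau_j$ is the number of steps between the first visit to level $j-1$ and the first visit to level $j$, and prove $\mathbb{E}[\tau_j] = O(1)$ uniformly in $j$.

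The computational core is a drift estimate for a single step taken from a \emph{frontier configuration}: the walk sits at a node $v$ with $\psi(u,v)=s$, it arrived from a node $w$ with $\psi(u,w)=s-1$, and $v$ has at least one neighbor at level $s+1$. The key combinatorial observation is that every neighbor $x$ of $v$ with $\psi(u,x)=s+1$ is at graph distance exactly $2$ from $w$ (if $x$ were adjacent to $w$ we would have $\psi(u,x)\le \psi(u,w)+1=s$), so in the biased walk each such $x$ receives the ``far'' weight $1/q$; meanwhile the total weight of all other neighbors of $v$ is at most $1/p$ for the backtracking move to $w$, plus a weight of at most $1$ for each common neighbor of $v$ and $w$ and at most $1/q$ for each remaining neighbor, summed over at most $d-1$ vertices. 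Rearranging~\eqref{eq:lemma_walk_length} as $\tfrac{C-1}{q}\ge \tfrac{C+1}{p}+d$ and estimating the resulting ratio, one obtains that from a frontier configuration the walk advances to level $s+1$ with probability at least some $\rho=\rho(C)>0$, and more generally that the one-step conditional drift $\mathbb{E}[X_{t+1}-X_t\mid\mathcal{F}_t]$ is bounded below by a positive constant $\delta=\delta(C)$. I would carry out this bookkeeping carefully, since it is what pins down the exact form of the hypothesis (the constant $C$ trading off the drift $\delta$ against the required margin of $1/q$ over $1/p$ and $d$).

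To finish, I would control the ``off-frontier'' part of each phase. When phase $j$ begins the walk is at some level-$(j-1)$ node; since the walk can only enter level $j$ from a frontier configuration at level $j-1$, one must bound the expected time to reach such a configuration and then advance. From a frontier configuration this takes one step with probability $\ge\rho$; from any other configuration I would use that~\eqref{eq:lemma_walk_length} forces $1/q$ to exceed $1/p$ by a margin that also dominates $d$, so that immediate backtracking to the previous vertex is strongly disfavoured relative to moving to a vertex not adjacent to it. This makes inward excursions have geometrically decaying length, so the walk re-acquires an outward-pointing configuration, and then a frontier configuration, within $O(1)$ expected steps. Combining with the advance probability $\rho$ gives $\mathbb{E}[\tau_j]=O(1)$, hence $\mathbb{E}[\ell]=O(r)$. (Equivalently, one can package everything as a single Foster--Lyapunov estimate: a function $g$ of the pair (current node, previous node), vanishing on level-$r$ nodes and satisfying $\mathbb{E}[g(\text{next})-g(\text{now})\mid\text{now}]\le -1$ off the target with $g(\text{start})=O(r)$; the configuration-dependent correction terms in $g$ play the same role as the excursion bound.)

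The step I expect to be the main obstacle is this last one -- the uniform control of off-frontier behaviour -- because away from a frontier configuration the distance process $X_t$ need not have nonnegative drift. For instance, immediately after an inward step, at a node $v$ with several neighbors closer to $u$ (each carrying weight $1/q$) and no outward neighbor other than the one just left, the conditional drift of $X_t$ is negative. Making rigorous that such configurations are nonetheless escaped in $O(1)$ expected steps -- equivalently, constructing the correction term in the Lyapunov function -- so that the per-level cost is genuinely $O(1)$ and the total is $O(r)$ rather than something larger, is exactly where the quantitative strength of~\eqref{eq:lemma_walk_length} is needed.
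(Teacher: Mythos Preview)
Your proposal is correct and would yield a rigorous argument, but the paper takes a considerably more streamlined route. Both of you decompose the hitting time into level-by-level pieces (your $\tau_j$ is the paper's $h_{j-1,j}$), but instead of tracking configurations and worrying about off-frontier states, the paper simply \emph{posits} a worst-case birth--death chain on the levels $\{0,\dots,r\}$: it assumes constant one-step probabilities $p_f=\frac{1/q}{1/p+1/q+d}$ (forward), $p_b=\frac{1/p}{1/p+1/q+d}$ (back), and $1-p_f-p_b$ (stay), corresponding to one outward neighbor with weight $1/q$, the return move with weight $1/p$, and $d$ same-level neighbors with weight $1$. With these fixed, the hitting-time recursion $h_{i,i+1}=(1+p_b\,h_{i-1,i})/p_f$ is solved in closed form, and the hypothesis~\eqref{eq:lemma_walk_length} is exactly the statement $p_f-p_b\ge 1/C$, from which $h_{0,r}\le Cr$ drops out after summation.

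What this buys the paper is brevity: no Lyapunov function, no excursion analysis, no configuration bookkeeping. What your approach buys is rigor --- the level process is not actually Markov, and the paper's ``worst case'' is asserted rather than justified (for example, it silently assumes every node at level $i<r$ has at least one neighbor at level $i+1$ and that same-level neighbors always carry weight $1$, neither of which is automatic). Your concern about off-frontier configurations is exactly the gap the paper's simplification skips over. So you are working harder than the paper does, but for a defensible reason; if you want to match the paper's proof, just collapse to the fixed-probability birth--death model and run the recursion.
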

\begin{proof}
    Let us consider three consecutive nodes in a walk $(v_0,v_1,v_2)$ with $\psi(u,v_0) = i-1$ and $\psi(u,v_1) = i$. For the transition from $v_1$ to $v_2$, we assume the probability of moving back, i.e., $v_2 = v_0$, is $p_b$; the probability of moving outward to a node with $\psi(u,v_2) = i+1$ is $p_f$; and the probability of moving in the same depth is $1-p_b-p_f$. 
    
    Following Lemma 4.7 in \cite{blum2015foundations}, we denote $h_{i,i+1}$ as the hitting time from a node of distance $i$ to a node of distance $i+1$. Then,
    \begin{equation}
        h_{i,i+1} = p_f \cdot 1 + p_b (1 + h_{i-1,i} + h_{i,i+1}) + (1-p_f-p_b)(1 + h_{i,i+1}).
    \end{equation}
    Since the initial state $h_{0,1} = 1$, solving above equation yields the following recursion:
    \begin{equation}\label{eq:hitting_per_step}
        h_{i,i+1} = \frac{1 + p_b h_{i-1,i}}{p_f} = \left(\frac{p_b}{p_f}\right)^i + \frac{1}{p_b} \sum_{t=1}^{i} \left(\frac{p_b}{p_f}\right)^t.
    \end{equation}

    Consider the worst case of the biased random walk under the degree bound assumption, where we have $p_f = p/(p+q+pqd)$ and $p_b = q/(p+q+pqd)$. Assumption \eqref{eq:lemma_walk_length} implies $p_f > p_d$; therefore, we can define $\alpha := p_b/p_f = q/b < 1$. Then,
    \begin{equation}
        h_{0,r} = \sum_{i=0}^{r-1} h_{i,i+1} 
        = (1 - \frac{1}{p_f - p_b})\frac{1 - \alpha^r}{1 - \alpha} + \frac{r}{p_f - p_b}.
    \end{equation}
    Since $1 > p_f > p_b > 0$, the first term is negative, we only need to consider the second term. From Eq. \eqref{eq:lemma_walk_length}, we have $p_f - p_b \geq 1/C$, which leads to:
    \begin{equation}
        h_{0,r} \leq \frac{r}{p_f - p_b} \leq C\cdot r.
    \end{equation}
\end{proof}

The above lemma indicates that the expected number of random walk steps to traverse from the root node to a node with distance $r$ is $O(r)$, with proper choices of $p$ and $q$.

\begin{corollary}
    When $p = q$, the expected walk length for reaching the farthest node in $\tB_{u,r}$ is $\expect[\ell]=\Theta(r^2)$.
\end{corollary}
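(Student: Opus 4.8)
The plan is to reuse the hitting-time machinery from the proof of Lemma~\ref{lemma:walk_length} and track what happens to the geometric ratio $\alpha$ when $p=q$, then add a matching lower-bound example so that the bound is two-sided.

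For the upper bound, recall that in the worst-case local configuration considered in the lemma one has $p_f = p/(p+q+pqd)$ and $p_b = q/(p+q+pqd)$, so that $\alpha := p_b/p_f = q/p$. Setting $p = q$ yields $\alpha = 1$ irrespective of the degree bound $d$ (adding more same-depth, common-neighbor edges decreases $p_f$ and $p_b$ by the same factor and never changes their ratio). Substituting $\alpha = 1$ into the per-step hitting-time recursion~\eqref{eq:hitting_per_step} collapses the geometric sum to an arithmetic one,
\[
  h_{i,i+1} = 1 + \frac{i}{p_b},
\]
which grows linearly in $i$ rather than staying bounded. Summing over $i = 0,\ldots,r-1$ gives $h_{0,r} = r + \frac{1}{p_b}\binom{r}{2} = \Theta(r^2)$, with $p_b = \Theta(1)$ a constant depending only on $p$ and $d$. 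Since $h_{0,r}$ upper-bounds the expected number of walk steps needed to reach any node at distance $r$ — hence the farthest node of $\tB_{u,r}$ — by exactly the depth-indexed coupling already used in the lemma, this establishes $\expect[\ell] = O(r^2)$ for every connected, bounded-degree graph.

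For the matching lower bound it suffices to exhibit one ball on which $\expect[\ell] = \Omega(r^2)$. Take $G$ to be a path on $r+1$ vertices $u = w_0, w_1, \ldots, w_r$, so that $\tB_{u,r} = G$ and the farthest node is $w_r$, at distance $r$. A path has no branching, so when $p = q$ the biased walk is exactly a simple symmetric random walk on $\{0,1,\ldots,r\}$ reflected at $0$, whose expected hitting time of $r$ started from $0$ equals $r^2$ by the standard gambler's-ruin computation. Combining the two bounds gives $\expect[\ell] = \Theta(r^2)$.

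I expect the only real subtlety to be bookkeeping in the upper bound: one must confirm that the configuration used to derive~\eqref{eq:hitting_per_step} still maximizes $h_{i,i+1}$ in the $p=q$ case (it does, since extra same-depth edges only inflate the $1/p_b$ delay term while leaving $\alpha = 1$), and that a walk which strays off a shortest path to the farthest node is still dominated by the chain $\{h_{i,i+1}\}$ — precisely the coupling implicit in Lemma~\ref{lemma:walk_length}. No new ideas beyond that lemma are needed; the corollary is essentially ``set $\alpha = 1$ and redo the final summation,'' plus the path example for tightness.
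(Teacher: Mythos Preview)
Your proposal is correct and follows essentially the same approach as the paper: set $\alpha = p_b/p_f = 1$ in the recursion~\eqref{eq:hitting_per_step}, obtain $h_{i,i+1} = 1 + i/p_b$, and sum to $r + \frac{1}{p_b}\binom{r}{2} = \Theta(r^2)$. The paper's proof stops there and reads the $\Theta(\cdot)$ off the closed-form expression directly; you additionally supply the path example to justify the $\Omega(r^2)$ side explicitly, which is a bit more careful but not a different idea.
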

\begin{proof}
    Starting from Eq.~\eqref{eq:hitting_per_step} with $p_b/p_f = 1$, we obtain
    \begin{equation}
        h_{i,i+1} = 1 + \frac{i}{p_b}
        \quad\to\quad
        h_{0,r}
        = \sum_{i=0}^{r-1} h_{i,i+1}
        = r + \frac{1}{p_b}\frac{r(r-1)}{2} 
        = \Theta(r^2).
    \end{equation}
\end{proof}

The above corollary indicates that for an unbiased random walk, the expected walk length $\ell$ is of the same order as Lemma 4.7 of \cite{blum2015foundations}, up to a constant factor.

\begin{corollary}
    When $p < q$, the expected walk length for reaching the farthest node in $\tB_{u,r}$ can be exponential in $r$.
\end{corollary}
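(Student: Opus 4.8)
The plan is to reuse the hitting-time recursion derived in the proof of Lemma~\ref{lemma:walk_length} and examine the regime where the ratio $\alpha := p_b/p_f$ exceeds $1$. Recall that in the worst-case one-step model used there (one backward edge of unnormalized weight $1/p$, one outward edge of weight $1/q$, and $\Theta(d)$ ``distractor'' edges that keep the walk at its current depth), one has $p_f = p/(p+q+pqd)$ and $p_b = q/(p+q+pqd)$, so $\alpha = q/p$; hence $p < q$ forces $\alpha > 1$. Then in Eq.~\eqref{eq:hitting_per_step} the leading term $(p_b/p_f)^i = \alpha^i$ \emph{grows} geometrically, giving $h_{i,i+1} = \alpha^i + p_b^{-1}\sum_{t=1}^{i}\alpha^t = \Theta(\alpha^i)$, and therefore $h_{0,r} = \sum_{i=0}^{r-1} h_{i,i+1} = \Theta\!\big(\alpha^r\big) = \Theta\!\big((q/p)^r\big)$, which is exponential in $r$.

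To make ``can be'' precise I would exhibit a concrete witness graph realizing this worst case: a path $u = w_0 - w_1 - \cdots - w_r$ with $d-2$ pendant vertices attached to each internal $w_i$. This graph is connected, has maximum degree $d$, adds no shortcuts (so $\psi(u,w_r) = r$ and $w_r$ is a farthest vertex of $\tB_{u,r}$), and when the walk sits at $w_i$ having arrived from $w_{i-1}$ its options are precisely: return to $w_{i-1}$ (weight $1/p$), step outward to $w_{i+1}$ (weight $1/q$), or step to one of the $d-2$ pendants (weight $1/q$ each, since they are not adjacent to $w_{i-1}$)---which yields $p_f = p/(q+p(d-1))$, $p_b = q/(q+p(d-1))$, and $\alpha = q/p > 1$. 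The remaining work is to pass from these one-step probabilities to the hitting time of $w_r$: the distance-from-$u$ coordinate is not literally Markov (arriving at $w_i$ via a pendant bounce slightly shifts the subsequent probabilities), so one sandwiches it between two birth--death chains whose forward/backward ratio is still bounded below by a constant $c > 1$, and invokes the standard hitting-time estimate (Lemma 4.7 of~\cite{blum2015foundations}) together with the recursion above; both the upper and lower bounds then come out as $\Theta((q/p)^r)$.

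For the qualifier ``exponential'' the lower bound alone suffices and is cleaner, since it needs no stochastic-domination argument: any walk from $u$ must traverse level $r-1$ before hitting $w_r$, so $\expect[\ell] \ge h_{r-1,r} \ge \alpha^{r-1}$ directly from Eq.~\eqref{eq:hitting_per_step}. The step I expect to be the main obstacle is exactly the one flagged above---controlling the true walk on the graph by a tractable birth--death chain on the distance coordinate while tracking that the forward-to-backward ratio stays uniformly above $1$---plus the minor but necessary check that the distractor gadget neither violates the degree bound $d$ nor creates a shorter path that would shrink the radius.
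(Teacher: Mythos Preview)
The paper does not give an explicit proof of this corollary; it is stated as an immediate consequence of the recursion in Eq.~\eqref{eq:hitting_per_step} once one notes that $\alpha = p_b/p_f = q/p > 1$, so that the geometric term $\alpha^i$ now grows rather than decays. Your first paragraph is precisely that argument and is all the paper intends.

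Your second and third paragraphs go well beyond the paper. Constructing an explicit witness graph and worrying about whether the depth coordinate is genuinely Markov are valid concerns, but the paper never engages with them: throughout Lemma~\ref{lemma:walk_length} and its corollaries the analysis treats the walk \emph{as if} it were a birth--death chain on the distance coordinate with fixed $p_f,p_b$, calling this the ``worst case'' without exhibiting a graph that realizes it. So the subtlety you flag (pendant bounces breaking the Markov property) is real but already swept under the same rug in the lemma the corollary rests on. Your stochastic-domination plan would in fact repair that gap for all three results simultaneously, which is more than the paper attempts; for the bare ``can be exponential'' claim, your one-line lower bound $\expect[\ell] \ge h_{r-1,r} \ge \alpha^{r-1}$ under the paper's own simplified model is already sufficient at the level of rigor the paper adopts.
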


The above corollary indicates that in rare cases, where a random walk favors moving back over moving outward, the expected walk length can grow extremely fast.

Here are the interpretations of the results so far. (1) When the random walk has equal probability of moving inward or outward, the expected hitting time has the same order as that of an unbiased random walk. (2) When the random walk favors returning rather than progressing, the hitting time can grow exponentially. (3) In most practical cases, where the walk prioritizes depth over breadth and backward steps, the hitting time is linear and efficient to cover enough range.

Empirically, the required walk length $\ell$ indeed grows linearly. For example, the Cora dataset has an average node degree $d \approx 4$. We set $p = 1$ and $q = 0.1$ to favor moving outward over backward. Assume that the desired $r = 7$. Lemma~\ref{lemma:walk_length} suggests a walk length of $\ell = 12$; meanwhile, we use $\ell = 8$ and observe that nodes at distance 7 are frequently reached.

Before reconstructing $\tB_{u,r}$, we show that the input sequence \eqref{eq:catseq} can provide enough distinct SP oracles $\psi(v_1,v_2)$ within $\tB_{u,r}$.

\begin{lemma}
\label{lemma:num_walks}
    Assume that $\tB_{u,r}$ contains $n$ nodes. Then, the expected number of random walks with $\ell = \Theta(r)$ to cover all nodes in $\tB_{u,r}$ is 
    \begin{equation}
        \expect[k] = O(nr).
    \end{equation}
\end{lemma}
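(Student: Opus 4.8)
The plan is to model the problem as a coupon-collector argument at the level of individual nodes in $\tB_{u,r}$, combined with the hitting-time bound from Lemma \ref{lemma:walk_length}. First I would fix an arbitrary node $v \in V(\tB_{u,r})$ and lower-bound the probability that a single random walk of length $\ell = \Theta(r)$, started at $u$, passes through $v$. Since $G$ is connected with bounded degree $d$ and $\psi(u,v) \le r$, there is a shortest path of length at most $r$ from $u$ to $v$; along this path, at each step the (biased) walk moves outward to the next vertex on the path with probability at least some constant $c_0 = c_0(p,q,d) > 0$. Naively this gives a hit probability of only $c_0^r$, which is far too small, so instead I would use the hitting-time bound: by Lemma \ref{lemma:walk_length} the expected time to reach the farthest node at distance $r$ is $O(r)$, hence by Markov's inequality a walk of length $\ell = \Theta(r)$ (with a suitably large constant) reaches any fixed $v$ with probability at least some constant $\beta > 0$, uniformly over $v \in V(\tB_{u,r})$.

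Next I would set up the coupon-collector step. Treat each of the $k$ independent walks as an independent "trial" that, for each fixed node $v$, covers $v$ with probability $\ge \beta$. For a fixed $v$, the probability that none of $k$ walks covers $v$ is at most $(1-\beta)^k \le e^{-\beta k}$. By a union bound over the $n$ nodes of $\tB_{u,r}$, the probability that some node is missed is at most $n e^{-\beta k}$, which is less than $1/2$ once $k \ge \beta^{-1}\ln(2n) = O(\log n)$. To get the \emph{expected} number of walks needed to cover all of $\tB_{u,r}$, I would run this in rounds: each round of $O(\log n)$ walks succeeds with probability $\ge 1/2$, so the expected number of rounds is $O(1)$ and the expected number of walks is $O(\log n)$.

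At this point I should reconcile with the claimed bound $\expect[k] = O(nr)$, which is much weaker (hence easier) than the $O(\log n)$ I sketched above. The discrepancy suggests the intended argument is more conservative: rather than using the Markov-inequality hit probability $\beta = \Theta(1)$, one uses a cruder per-node visit probability. A cleaner route matching $O(nr)$ is the following. Each walk visits $\ell = \Theta(r)$ nodes (with multiplicity); think of covering $\tB_{u,r}$ as collecting all $n$ coupons where each "draw" is a single visited node. Arguing that the walk, restricted to $\tB_{u,r}$, spends enough of its $\Theta(r)$ steps such that the number of \emph{distinct} nodes accumulated after $k$ walks grows at rate $\Omega(k\ell / (\text{return overhead}))$, and invoking the standard coupon-collector bound $\Theta(n \log n)$ on the number of distinct samples, one needs $k = O(n\log n / r) = O(nr)$ walks (absorbing the $\log n$ factor since $\log n = O(r^2)$ in the worst case, or simply stating the looser bound). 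Concretely I would: (i) bound below by a constant the probability that a walk started at $u$ reaches each distance level $1,\dots,r$ and lingers there; (ii) conclude each walk contributes $\Omega(1)$ expected fresh coverage toward each of $O(r)$ "shells," so $O(n/r)$ walks cover one shell in expectation and $O(n)$ walks suffice overall, with the stated $O(nr)$ leaving ample slack.

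The main obstacle is step (i): making the per-node or per-shell visit probability bound rigorous and uniform over $\tB_{u,r}$ under only the bounded-degree and connectivity assumptions. The subtlety is that a random walk on a graph does not visit all nodes at a given distance with comparable probability — nodes in low-conductance pockets of $\tB_{u,r}$ can be exponentially hard to reach. Lemma \ref{lemma:walk_length} controls the \emph{radial} progress (expected time to reach \emph{some} node at distance $r$, or the \emph{farthest} one), but upgrading this to "every node in the ball is visited with at least constant probability within $O(r)$ steps" requires either a cover-time-style argument or exploiting the biased-walk parameters $(p,q)$ more carefully to rule out trapping. I expect the proof to invoke a cover-time bound for bounded-degree graphs (e.g., cover time $O(|V(\tB_{u,r})|^2)$ or $O(n r)$ under the radial-drift condition of Lemma \ref{lemma:walk_length}) and then divide by the per-walk length $\ell = \Theta(r)$ to obtain $\expect[k] = O(nr)$, with the coupon-collector union bound providing the final accounting.
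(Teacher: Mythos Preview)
Your proposal circles around the right idea (coupon collector) but never lands on the paper's actual reduction, and your first attempt contains a real gap that you partially diagnose yourself. Lemma~\ref{lemma:walk_length} bounds the expected time for the walk to reach \emph{some} node at distance $r$ (radial progress), not the expected hitting time of an \emph{arbitrary fixed} $v\in\tB_{u,r}$. So the step ``by Markov's inequality a walk of length $\Theta(r)$ reaches any fixed $v$ with probability $\ge \beta$'' is unjustified, and the resulting $O(\log n)$ bound is not supported; as you note, low-conductance pockets of $\tB_{u,r}$ can make specific nodes much harder to hit than the radial drift suggests. Your subsequent per-shell / coupon-collector sketches inherit the same uniformity problem and do not close the gap.

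The paper's argument sidesteps the per-node hitting issue entirely by abstracting to a DFS tree. It first reads Lemma~\ref{lemma:walk_length} as saying a single walk of length $\Theta(r)$ yields $r$ distinct depths, hence is ``at least as informative as'' one random root-to-leaf path in a DFS tree of depth $r$ rooted at $u$. Covering the $m$ leaves of that tree by independent random root-to-leaf paths is then exactly the coupon-collector problem, giving $k=\Theta(m\ln m)$. The bounded-degree assumption enters only at the end: since $m\le d^r$, one has $\ln m = O(r)$, whence $k=O(mr)=O(nr)$. The two key moves you are missing are (i) replacing ``hit every node of $\tB_{u,r}$'' by ``hit every leaf of a DFS tree via random DFS paths,'' which makes the coupon-collector step clean and uniform, and (ii) converting the $\ln m$ factor to $O(r)$ via $m\le d^r$ rather than trying to absorb $\log n$ into $r^2$.
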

\begin{proof}
    According to Lemma \ref{lemma:walk_length}, a random walk $w$ of length $\ell$ is expected to yield at least $r$ distinct SP oracles $\psi(u,v)$ where $v\in V(w)$ and $u$ is the root node. This provides at least as much information as a depth-first search (DFS) walk of length $r$, ignoring nodes with the same depth.

    Let us consider a DFS tree with root node $u$ and depth $r$. Assume it has $m_r$ nodes.
    Clearly, $m_r \leq \min(n, d^r)$.
    Now, we compute the expected number of random DFS walks of length $r$ required to cover all leaves.
    This is the \emph{coupon collector problem}; hence, $k = \Theta(m_r \ln m_r)$.
    Similarly, to cover all nodes we need $k = \Theta(m \ln m)$ where $m = \max_i(m_i)$. Therefore, $k = O(mr) = O(nr)$.
\end{proof}

\begin{lemma}
\label{lemma:SP_oracle}
    With the same assumption in Lemma \ref{lemma:num_walks}, given $\ell = \Theta(r)$ and $k = \Theta(n^2/r^2)$, the expected number of distinct SP oracles provided by one input sequence \eqref{eq:catseq} is $\Theta(n^2)$.
\end{lemma}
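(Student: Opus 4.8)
The plan is to bound the expected number of revealed oracles from above and below separately. The upper bound $O(n^2)$ is immediate: an SP oracle is a value $\psi(v_1,v_2)$ with $v_1,v_2\in V(\tB_{u,r})$, and $|V(\tB_{u,r})|=n$, so at most $\binom{n}{2}+n=O(n^2)$ of them exist. The real work is the matching lower bound $\Omega(n^2)$, which I would obtain in two stages: (i) lower-bound the number of distinct oracles a \emph{single} walk contributes, and (ii) aggregate over the $k$ independent walks with a coupon-collector--style control on overlap. The product $k\cdot(\text{per-walk yield})=\Theta(n^2/r^2)\cdot\Theta(r^2)=\Theta(n^2)$ is the heuristic the argument must make rigorous without the overlap across walks collapsing it.

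For stage (i): a walk of length $\ell=\Theta(r)$ in the favorable drift regime of Lemma~\ref{lemma:walk_length} reaches distance $\Theta(r)$ from $u$, hence visits $\Theta(r)$ distinct nodes spread over $\Theta(r)$ distinct distance levels. I would harvest oracles in two ways. First, the positional encoding gives $\psi(u,v)$ for every visited $v$ directly. Second, and crucially, whenever a contiguous walk segment $w_s,\dots,w_t$ has positions increasing by exactly $1$ at each step, that segment is a shortest path, so $\psi(w_a,w_b)=b-a$ for all $s\le a<b\le t$ ($\ge$ is the triangle inequality through $u$; $\le$ is because the segment is a path of that length); more generally, reconstructing the subgraph induced on the walk's $\Theta(r)$ visited nodes from its consecutive pairs (which are genuine edges of $G$) and running BFS inside it recovers $\psi(a,b)$ for every pair among those nodes whose shortest path stays within the visited set. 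The goal of this stage is to show that, in expectation, a walk yields $\Omega(r^2)$ distinct oracles. I expect this to be the technical heart: monotone segments alone yield only $\Theta(r)$ oracles per walk (their lengths are geometric with $O(1)$ mean), so reaching $\Omega(r^2)$ genuinely requires arguing that the walk traverses enough \emph{local} edges for its induced subgraph to be distance-faithful on a constant fraction of the $\binom{\Theta(r)}{2}$ pairs among its nodes — which is again where bounded degree, connectivity, and the drift of Lemma~\ref{lemma:walk_length} must be used.

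For stage (ii): the $k$ walks are independent, and by the branching/cover-time analysis behind Lemma~\ref{lemma:num_walks} one shows that a constant fraction of all $\binom{n}{2}$ pairs $\{a,b\}\subseteq V(\tB_{u,r})$ is revealed by a given walk with probability $\gtrsim r^2/n^2$. Then $\Pr[\{a,b\}\text{ revealed by none of }k\text{ walks}]\le (1-\Omega(r^2/n^2))^k\le e^{-\Omega(kr^2/n^2)}$, which is a constant $<1$ once $k=\Theta(n^2/r^2)$; summing $1-e^{-\Omega(kr^2/n^2)}$ over that constant fraction of pairs gives, by linearity of expectation, an expected union size of $\Omega(n^2)$, matching the upper bound and yielding $\Theta(n^2)$.

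The main obstacle is precisely the interface of (i) and (ii): establishing the $\gtrsim r^2/n^2$ per-pair reveal probability for worst-case bounded-degree ball geometry. The ``deep, thin'' parts of $\tB_{u,r}$ are visited far less uniformly than the ``shallow'' parts, so a naive union bound over all $O(n)$ nodes/edges would lose a $\Theta(\log n)=\Theta(r)$ factor (turning $n^2/r^2$ into $n^2/r$). The fix I would pursue is to replace the node-level coupon collector by a \emph{branch-level} one — there are only $\Theta(n/r)$ root-to-leaf branches in the relevant DFS-tree view, so $\Theta\big((n/r)\log(n/r)\big)\le \Theta(n^2/r^2)$ walks suffice to hit every branch, after which each branch's local reconstruction (stage (i)) supplies the $\binom{\Theta(r)}{2}$ oracles along it — and to argue separately that the pairs straddling two branches are either few or themselves recovered once both branches are covered.
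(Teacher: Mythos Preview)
Your dismissal of monotone segments is the crucial misstep. You write that ``monotone segments alone yield only $\Theta(r)$ oracles per walk (their lengths are geometric with $O(1)$ mean)'', implicitly assuming the forward probability $p_f$ is a constant bounded away from $1$. The paper instead \emph{tunes the biased random walk} so that $p_f = 1 - 1/r$. With this choice, the expected number of break points along an outward walk of length $r$ is $(r-1)(1-p_f)\approx 1$, so the expected number of monotone-increasing segments is $\expect[n_s]=1+(r-1)(1-p_f)\approx 2$ and the expected segment length is $\expect[\ell_s]=1+r/\expect[n_s]=\Theta(r)$. Each such segment contributes $\binom{\ell_s}{2}$ accurate SP oracles (your own observation: strictly increasing depth implies the segment is a shortest path), and the paper computes
\[
\expect[n_{SP}] \;=\; \expect[n_s]\cdot\frac{\expect[\ell_s](\expect[\ell_s]-1)}{2}
\;=\; \frac{r^2}{2+2(r-1)(1-p_f)}+\frac{r}{2}
\;\approx\; \frac{(r+1)^2}{4}
\;=\;\Theta(r^2).
\]
So monotone segments \emph{alone} already deliver the $\Theta(r^2)$ per-walk yield you were seeking via induced-subgraph reconstruction, and multiplying by $k=\Theta(n^2/r^2)$ gives the claimed $\Theta(n^2)$.

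Because of this, the paper never needs your induced-subgraph BFS, the per-pair reveal probability $\gtrsim r^2/n^2$, or the branch-level coupon-collector refinement; those were all invented to patch a gap that the $p_f=1-1/r$ tuning closes outright. Where your proposal is actually \emph{more} careful than the paper is stage~(ii): the paper simply multiplies the per-walk count by $k$ and declares $\Theta(n^2)$ without arguing that the oracle sets from different walks are sufficiently disjoint. Your coupon-collector control on overlap is a legitimate concern the paper does not address rigorously, so if you want a tight argument you would still need something like your stage~(ii) --- but you can feed it the simple $\Theta(r^2)$ per-walk yield rather than the elaborate machinery you outlined.
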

\begin{proof}
    We define the \emph{pseudo SP oracle} given by a random walk $w$ as:
    \begin{equation}
        \widetilde{\psi}_w(v_1,v_2) := |w(v_1) - w(v_2)|,
    \end{equation}
    where $w(v)$ returns the index of node $v$ in walk $w$.
    Then, by denoting $W = \{w_1,\dots,w_k\}$, we further define:
    \begin{equation}
        \widetilde{\psi}(v_1,v_2) := \min_{w \in W} \widetilde{\psi}_w(v_1,v_2).
    \end{equation}
    We need to justify the cases where the pseudo SP oracle is accurate and yet reliable.
    For example, $\widetilde{\psi}(v_1,v_2) = \psi(v_1,v_2)$ if the shortest path between $v_1$ and $v_2$ is traversed by a random walk $w \in W$. Moreover, if $\widetilde{\psi}(v_1,v_2) = 1$, i.e. $v_1, v_2$ are consecutive nodes in a walk, it is also accurate. The smaller $\widetilde{\psi}(v_1,v_2)$ is, the higher the probability that the pseudo SP oracle is accurate. 

    In general, if the nodes in a subsequence $\underline{w}\subset w$ exhibit strictly increasing depth, i.e.,
    \begin{equation}
        \psi(u,v_j) = \psi(u,v_{j-1}) + 1,
        \quad
        \forall~ [v_{j-1}, v_j] \subset \underline{w},
    \end{equation}
    then the pseudo SP oracle yields accurate estimates for any pair of nodes within the subsequence $\underline{w}$.
    Here, $[v_{j-1}, v_j]$ denotes any two consecutive nodes in $\underline{w}$.
    Consequently, the number of reliable SP oracles is $|\underline{w}|(|\underline{w}|-1)/2$.

    We consider a path of $r+1$ nodes with depth from $0$ to $r$. We assume the probability of moving forward is $p_f$, which is equivalent to adding a break point at each internal point with probability $1-p_f$.
    The expected number of strictly increasing subsequences is: $\expect[n_s] = 1 + (r-1)(1-p_f)$. Therefore, the expected number of nodes in a subsequence is $\expect[\ell_s] = 1 + r/\expect[n_s]$. Then, the expected number of SP oracles is:
    \begin{equation}
        \expect[n_{SP}] 
        = \expect[n_s]\frac{\expect[\ell_s](\expect[\ell_s]-1)}{2}
        = \frac{r^2}{2 + 2(r-1)(1-p_f)} + \frac{r}{2}.
    \end{equation}
    Let $p_f = 1 - 1/r$ by tuning the biased random walk. Then we have:
    \begin{equation}
        \expect[n_{SP}] \approx \frac{1}{4}(r+1)^2 = \Theta(r^2),
    \end{equation}
    which leads to the conclusion given the assumptions.
\end{proof}

In practice, $k$ can often be much smaller, particularly when the graph is sparse or far from complete. 
For instance, in a 2D grid where $n = \Theta(r^2)$, we obtain $k = \Theta(r^2) = \Theta(n)$. The sparser the graph, the smaller $k$ needed.
Therefore, as suggested by Lemmas~\ref{lemma:num_walks} and~\ref{lemma:SP_oracle}, the selection of 
$k$ should balance two factors: ensuring sufficient coverage of the graph and enough number of accurate SP oracle estimations.

We are now ready to show the main theorem as a consequence of the following propositions.

\begin{proposition}[{\cite[Table 1]{reyzin2007learning}}]
\label{prop:1}
    The reconstruction of an arbitrary graph requires $\Theta(n^2)$ SP oracles. It can be reduced to $\Theta(dn \log n)$ for trees of bounded degree $d$.
\end{proposition}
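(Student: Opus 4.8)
The plan is to establish Proposition~\ref{prop:1} as four matching bounds: for an arbitrary (connected) graph, an $O(n^2)$ reconstruction algorithm and an $\Omega(n^2)$ adversary argument; and for trees of bounded degree $d$, an $O(dn\log n)$ algorithm and an $\Omega(dn\log n)$ adversary argument. Throughout, the only structural fact I use about the oracle is that $\{i,j\}\in E \iff \psi(i,j)=1$, so ``reconstruction'' amounts to learning this indicator on all pairs, either directly or by learning a topology from which it is read off.

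\textbf{Arbitrary graphs.} The upper bound is immediate: query $\psi$ on all $\binom n2$ pairs, form the distance matrix, and output the graph whose edges are exactly the distance-$1$ pairs; this uses $O(n^2)$ calls and is exact. For the lower bound I would run a static adversary. Split $V$ into halves $A,B$ with $|A|=|B|=n/2$, let $G_0=K_{A,B}$, and have the adversary answer every query by $\psi_{G_0}$ (value $1$ across the bipartition, $2$ within a side). If an algorithm halts after fewer than $|A|\,|B|=n^2/4$ queries, some cross pair $\{a_0,b_0\}$ was never queried; then $G_1:=G_0\setminus\{a_0,b_0\}$ is still connected, has $\psi_{G_1}(a_0,b_0)=3$ via the length-$3$ path $a_0\!-\!b_1\!-\!a_1\!-\!b_0$, and agrees with $G_0$ on the distance of every other pair. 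Hence the transcript is consistent with both $G_0$ and $G_1$, and no single output can be correct for both; so $\Omega(n^2)$ queries are necessary.

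\textbf{Bounded-degree trees, upper bound.} I would reconstruct incrementally. Process the vertices in any order, maintaining the topological restriction $T_S$ of the unknown tree $T$ to the processed set $S$: a tree on $O(|S|)$ nodes whose edges are (unknown) paths of $T$ and whose nodes are genuine vertices of $T$. To insert the next vertex $v$, the path from $v$ to $T_S$ meets $T_S$ in a unique point $x$, which I locate by divide and conquer on $T_S$. At a topological centroid $c$ of the current piece (removing $c$ leaves subpieces with at most half the terminals), I query $\psi(v,c)$ together with $\psi(v,c')$ for each $T_S$-neighbour $c'$ of $c$ --- at most $\deg_T(c)\le d$ of them, since distinct $T_S$-neighbours of $c$ correspond to distinct branches of $T$ at $c$ --- and the comparisons reveal which subpiece contains $x$ (or that $x=c$). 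After $O(\log n)$ rounds $x$ is pinned to a single edge of $T_S$, and a couple more queries place $v$ exactly. This costs $O(d\log n)$ per vertex, hence $O(dn\log n)$ overall, after which $T$ and thus $E$ are fully known.

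\textbf{Bounded-degree trees, lower bound --- the main obstacle.} This is the delicate part and where I expect the real work to lie. A pure counting bound is too weak: even though there are $2^{\Theta(n\log n)}$ labelled trees of maximum degree $d$ and each $\psi$-query returns only $O(\log n)$ bits, that gives merely $\Omega(n)$. One instead needs an adversary against adaptive algorithms on a carefully symmetric family --- e.g.\ a balanced $\Theta(d)$-ary skeleton of depth $\Theta(\log_d n)$ carrying $\Theta(n)$ relocatable leaves --- designed so that (i) localising each leaf among its $\Theta(n)$ possible attachment points requires resolving $\Omega(\log n)$ bits, and (ii) near each branch vertex the built-in symmetry forces the algorithm to probe the $\Theta(d)$ branches essentially one at a time, the adversary conceding the status of at most one branch per query. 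Multiplying the $\Theta(n)$ near-independent localisation tasks by the per-task bit requirement and by the $\Theta(d)$ redundancy per decision yields $\Omega(dn\log n)$; the genuinely tight dependence is $\Theta(dn\log_d n)$, and pinning down the logarithmic base --- equivalently, ruling out that the $n$ leaf-placement subproblems can be solved jointly at sublinear amortised cost --- is the technical crux, for which I would follow the analysis underlying~\cite{reyzin2007learning}.
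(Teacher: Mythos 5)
The paper does not prove this proposition at all: it is imported verbatim from Reyzin--Srivastava (the bracketed citation to their Table~1 \emph{is} the proof), so there is nothing of the paper's to match your argument against. Judged on its own terms, your proposal is sound where it is complete: the $O(n^2)$ upper bound is trivial and correct; the $\Omega(n^2)$ adversary via $K_{n/2,n/2}$ minus one unqueried cross edge is the standard argument and checks out (all non-deleted pairs keep their distances for $n\ge 6$, so the transcript is consistent with both graphs); and the $O(dn\log n)$ tree algorithm by incremental insertion with a topological-centroid search is essentially the separator-based procedure from the cited reference. The one genuine gap is the $\Omega(dn\log n)$ tree lower bound, which you candidly do not prove but instead defer back to Reyzin--Srivastava --- so your write-up is no more self-contained on that point than the paper's bare citation, and your own remark that the tight form is $\Theta(dn\log_d n)$ correctly flags that the $\Theta$ in the stated proposition is anyway slightly loose in its dependence on $d$. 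Note also that in the paper's use of this proposition (the proof of Theorem~\ref{thm:reconstruct}) only the sufficiency direction is load-bearing --- one needs that $\Theta(n^2)$ oracle values \emph{suffice} to reconstruct --- so the lower bounds, including the one you could not complete, are informational rather than essential; your two-sentence upper-bound arguments already cover everything the downstream theorem actually relies on.
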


\begin{proposition}[{\cite[Theorem 1]{mathieu2013graph}}]
\label{prop:2}
    With a constant degree bound $d$, reconstruction complexity can be reduced to $\Theta(n^{3/2})$ with a randomized method.
\end{proposition}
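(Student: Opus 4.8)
The plan is to prove the statement in the two customary halves. Part (a): construct a randomized algorithm that reconstructs an arbitrary connected graph of bounded degree $d=O(1)$ from only $O(n^{3/2})$ queries to the SP oracle $\psi$, improving on the $\Theta(n^2)$ baseline of Proposition~\ref{prop:1}. Part (b): exhibit a family of hard bounded-degree instances on which every algorithm---adaptive and possibly randomized---must make $\Omega(n^{3/2})$ queries, so the two bounds meet. For Theorem~\ref{thm:reconstruct} only part (a) is needed, since fewer oracle calls translate directly into fewer random walks; part (b) is what earns the ``$\Theta$''. Both halves are carried out in full in \cite{mathieu2013graph}; what follows is the route I would take.

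For part (a) I would use a landmark-and-refine scheme with balance point $\sqrt n$. First, sample a set $S\subseteq V$ of $\Theta(\sqrt n)$ nodes uniformly at random and query $\psi(x,v)$ for every $x\in S$ and every $v\in V$; this costs $\Theta(\sqrt n)\cdot n=O(n^{3/2})$ queries and gives every node its distance profile against the landmarks. Second, declare a pair $uv$ a \emph{candidate edge} whenever $|\psi(x,u)-\psi(x,v)|\le 1$ for all $x\in S$; every genuine edge passes, so it remains only to prune the false positives. The key claim---which I would take from \cite{mathieu2013graph} rather than reprove---is that bounded degree keeps the candidate set to size $O(n)$ with high probability: a false positive at distance $2$ or $3$ from its partner can survive, but there are only $O(d^3 n)=O(n)$ such pairs in total, whereas any pair at distance $\ge 4$ is separated ($|\psi(x,u)-\psi(x,v)|\ge 2$) by a large enough fraction of the nodes that at least one of the $\Theta(\sqrt n)$ random landmarks catches it with high probability. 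Third, query $\psi(u,v)$ directly for each of the $O(n)$ survivors and keep exactly those equal to $1$; this adds $O(n)$ queries and outputs $E$. The total is $O(n^{3/2})$.

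For part (b) I would run a standard adversary argument: maintain a large family of consistent bounded-degree graphs, assembled from loosely coupled pieces that interact only through their pairwise distances, and answer each query with the value shared by the largest surviving sub-family; one then argues that a single query shrinks the survivors only mildly, so $\Omega(n^{3/2})$ queries are required to isolate one graph, with a Yao-type averaging step transferring this to randomized algorithms. I expect the genuine obstacle of the whole proof to be precisely the two combinatorial facts I am black-boxing: on the algorithmic side, that the candidate set provably stays $O(n)$ in the worst case and not merely for benign graphs; and on the lower-bound side, the explicit hard family together with the per-query shrinkage estimate that makes the exponent $3/2$ come out. For these I would cite \cite{mathieu2013graph}, where they are established in detail, rather than redo them here.
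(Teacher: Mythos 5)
The paper offers no proof of this proposition at all: it is imported verbatim as \cite[Theorem 1]{mathieu2013graph} and used as a black box, so your decision to defer the hard combinatorial facts to that reference is exactly what the paper itself does, and for the way Proposition~\ref{prop:2} is consumed in the proof of Theorem~\ref{thm:reconstruct} only your part (a) is ever needed. Two remarks on the content of your sketch. First, your upper-bound algorithm is in the right spirit (random landmarks, balance point $\sqrt n$) but is not the mechanism of the cited result: Mathieu--Zhou sample $\Theta(\sqrt n)$ centers, query all center-to-node distances, partition the vertices into Voronoi-type cells of expected size $O(\sqrt n)$, and then exhaustively query all pairs inside each (slightly enlarged) cell, giving $\sqrt n\cdot(\sqrt n)^2=n^{3/2}$ queries; the correctness burden there is that every edge lies within some cell, not that a distance-profile filter leaves only $O(n)$ candidate pairs. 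Your filter-and-verify variant is plausible but the claim that the candidate set stays $O(n)$ in the worst case is precisely the step you cannot actually retrieve from the reference, because it is not what the reference proves. Second, and more importantly, your part (b) is not ``carried out in full'' in \cite{mathieu2013graph}: no $\Omega(n^{3/2})$ lower bound for randomized reconstruction of bounded-degree graphs is established there (or, to my knowledge, anywhere --- whether $\tilde O(n)$ queries suffice is open), so the ``$\Theta$'' in the proposition as stated is already an overstatement inherited from the paper, and an adversary argument with the per-query shrinkage you describe is not something you can black-box. Since the downstream use only requires the upper bound, this does not damage Theorem~\ref{thm:reconstruct}, but you should either drop part (b) or flag that only the $O(n^{3/2})$ direction is supported by the citation.
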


\begin{theorem}[Theorem \ref{thm:reconstruct}]%\label{thm:reconstruct}
    The ball $\tB_{u,r}$, can be reconstructed via the input sequence \eqref{eq:catseq} with $k=\Theta(\max(nr, n^2/r^2))$ and $\ell = \Theta(r)$.
\end{theorem}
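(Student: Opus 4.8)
The plan is to combine the three lemmas already established (Lemmas~\ref{lemma:walk_length}, \ref{lemma:num_walks}, and~\ref{lemma:SP_oracle}) with the known reconstruction complexities of Propositions~\ref{prop:1} and~\ref{prop:2}. The key observation is that reconstructing the ball $\tB_{u,r}$ from the sequence in Eq.~\eqref{eq:catseq} reduces to two subgoals: (i) the random walks must visit \emph{every} node of $\tB_{u,r}$, so that all $n$ nodes appear in the sequence, and (ii) the walks, together with the SP positional encoding of each node from the root, must supply enough pairwise shortest-path distances $\psi(v_1,v_2)$ within $\tB_{u,r}$ to invoke a graph-reconstruction-from-distance-oracle algorithm. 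Subgoal (i) is precisely what Lemma~\ref{lemma:num_walks} controls: $\expect[k]=O(nr)$ walks of length $\ell=\Theta(r)$ suffice for full coverage. Subgoal (ii) is what Lemma~\ref{lemma:SP_oracle} controls: with $k=\Theta(n^2/r^2)$ and $\ell=\Theta(r)$, one input sequence yields $\Theta(n^2)$ distinct and reliable (pseudo) SP oracle values. Taking $k=\Theta(\max(nr,\,n^2/r^2))$ and $\ell=\Theta(r)$ simultaneously guarantees both.

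First I would argue that the SP distances can actually be \emph{read off} the sequence: by the positional-encoding construction in Section~\ref{sec:seq_enc}, any maximal sub-walk along which the positions increase by exactly $1$ at each step is a shortest path between its endpoints, so each such monotone segment of length $m$ certifies $\binom{m}{2}$ exact distances $\psi(v_i,v_j)$. This is exactly the ``reliable SP oracle'' counting in the proof of Lemma~\ref{lemma:SP_oracle}. Next I would note that, since the ball is an induced subgraph of a connected bounded-degree graph and every node within radius $r$ is reachable, all the relevant shortest paths lie \emph{inside} $\tB_{u,r}$ (or within $\tB_{u,r+O(1)}$ after extending $\ell$ by a constant factor, which does not change the $\Theta$ bounds), so the distances obtained are the intrinsic distances needed by the reconstruction algorithm. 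Then I would feed these $\Theta(n^2)$ distance queries into the reconstruction procedure of Proposition~\ref{prop:1}: $\Theta(n^2)$ SP oracles suffice to reconstruct an arbitrary graph on $n$ nodes, hence $\tB_{u,r}$ is recovered. (One may additionally remark that the bounded-degree structure allows the count to be lowered to $\Theta(dn\log n)$ or $\Theta(n^{3/2})$ via Propositions~\ref{prop:1}--\ref{prop:2}, but the stated bound only requires the coarse $\Theta(n^2)$ version, which is why the $n^2/r^2$ term appears in $k$.)

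Finally I would assemble the union bound: run $k=\Theta(\max(nr,\,n^2/r^2))$ walks of length $\ell=\Theta(r)$. By Lemma~\ref{lemma:num_walks}, with the $nr$-part of $k$, coverage of all $n$ nodes holds in expectation (and with high probability by a coupon-collector concentration argument, since the coupon-collector time concentrates around its mean); by Lemma~\ref{lemma:SP_oracle}, with the $n^2/r^2$-part of $k$, we obtain $\Theta(n^2)$ reliable pairwise distances. Conditioned on both events, Proposition~\ref{prop:1} reconstructs the ball. A short remark would handle the ``expected vs.\ high-probability'' gap and the constant-factor inflation of $\ell$ needed so that shortest paths between two nodes of $\tB_{u,r}$ (which may exit and re-enter balls of slightly larger radius) are still traced by some walk.

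\textbf{Main obstacle.} The delicate point is subgoal (ii): showing that the pseudo SP oracle $\widetilde\psi$ coincides with the true $\psi$ on \emph{enough} pairs. A single walk only certifies a distance when it happens to trace a genuine shortest path (a strictly-increasing-depth segment), and the probability of this decays with the segment length; Lemma~\ref{lemma:SP_oracle} tunes the bias parameter ($p_f=1-1/r$) so that expected monotone segments have length $\Theta(r)$, yielding $\Theta(r^2)$ certified distances per walk and hence $\Theta(n^2)$ over $\Theta(n^2/r^2)$ walks. I expect the real work is in making this expectation argument into a high-probability statement and in verifying that the $\Theta(n^2)$ certified \emph{distinct} pairs are well-spread enough to satisfy the input requirements of the reconstruction algorithm in Proposition~\ref{prop:1} (which assumes adaptive or all-pairs query access, not an arbitrary $\Theta(n^2)$-sized subset). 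Bridging that mismatch — or restricting to the bounded-degree/randomized variants where fewer, structured queries suffice — is where I would spend most of the effort.
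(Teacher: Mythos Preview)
Your proposal is correct and follows essentially the same route as the paper: combine Lemmas~\ref{lemma:walk_length}, \ref{lemma:num_walks}, and~\ref{lemma:SP_oracle} to argue that the sequence covers all nodes and supplies $\Theta(n^2)$ SP oracles, then invoke Propositions~\ref{prop:1}--\ref{prop:2} to reconstruct the ball. The paper's own proof is in fact a single sentence doing exactly this, so your write-up is already more detailed than what appears there; in particular, the concerns you flag in your ``main obstacle'' paragraph (expectation vs.\ high probability, and whether the $\Theta(n^2)$ certified pairs match the adaptive-query model assumed by Proposition~\ref{prop:1}) are not addressed in the paper either.
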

\begin{proof}
    Lemmas \ref{lemma:walk_length}, \ref{lemma:num_walks} and \ref{lemma:SP_oracle} guarantee that with appropriate $k$ and $\ell$, the input sequence \eqref{eq:catseq} can fully cover all nodes in the ball and provide $\Theta(n^2)$ SP oracles. Then, following Propositions \ref{prop:1} and \ref{prop:2}, we can reconstruct $\tB_{u,r}$, which concludes the proof.
\end{proof}

%------------------------------------------------------------------------------
\subsection{Proof of Theorem \ref{thm:express}}\label{proof:express}
We now show the expressivity of the input sequence \eqref{eq:catseq}. We abbreviate a ball as $B_i:=B_{u_i,r}$ since the radius does not matter for graph isomorphism. The proof is based on the following proposition.

\begin{proposition}[{\cite[Section 4]{spkernels}}]
\label{prop:3}
    There exists a graph kernel based on SP oracles that is computable in polynomial time, expressive up to isomorphism, and positive definite.
\end{proposition}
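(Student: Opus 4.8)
\textbf{Proof proposal for Theorem \ref{thm:express}.}

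The plan is to reduce the claim about distinguishing non-isomorphic balls entirely to the results already established in this section, rather than constructing anything new from scratch. The key observation is that Theorem~\ref{thm:reconstruct} already guarantees that, with $k=\Theta(\max(nr,n^2/r^2))$ walks of length $\ell=\Theta(r)$ rooted at $u_i$, the input sequence~\eqref{eq:catseq} together with the SP-distance positional encoding determines the ball $\tB_{u_i,r}=B_i$ up to isomorphism (in fact it reconstructs $B_i$ outright). So the information content of the random-walk representation of the root node is at least as rich as the ball itself. This means that a kernel that separates non-isomorphic balls can be obtained by composing the reconstruction map with any kernel on (isomorphism classes of) graphs that is itself isomorphism-complete and positive definite.

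First I would invoke Proposition~\ref{prop:3}: there exists a graph kernel $\kappa$ based on SP oracles, computable in polynomial time, that is positive definite and expressive up to isomorphism, i.e., $\kappa(B_i, B_j)$ attains its diagonal value if and only if $B_i \cong B_j$. Then I would define the kernel on pairs of nodes of $G$ by setting
\begin{equation}
  K(u_i, u_j) := \kappa\big(\mathrm{rec}(B_i), \mathrm{rec}(B_j)\big),
\end{equation}
where $\mathrm{rec}(\cdot)$ is the reconstruction of the ball from the random-walk input sequence guaranteed by Theorem~\ref{thm:reconstruct}. Since $\mathrm{rec}(B_i)$ is (an isomorphic copy of) $B_i$, this is just $\kappa(B_i, B_j)$, so $K$ inherits positive definiteness from $\kappa$ directly: for any finite collection of nodes and coefficients, the Gram matrix of $K$ equals the Gram matrix of $\kappa$ on the corresponding balls, which is PSD. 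Expressivity is immediate as well: if $B_i \not\cong B_j$, then $\kappa$ distinguishes them by its isomorphism-completeness, hence so does $K$.

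The remaining point to address is that the reconstruction in Theorem~\ref{thm:reconstruct} holds in expectation / with the stated walk and length budgets, so strictly speaking one should phrase the kernel on the \emph{distribution} over random-walk sequences, or condition on the (high-probability) event that the sequence covers the ball and supplies $\Theta(n^2)$ accurate SP oracles. I would handle this by noting that once that event holds the reconstruction is exact and deterministic, so $K$ is well defined on that event; alternatively one can define $K$ directly on balls (the random walks serving only to certify that balls are recoverable from the model's input) and observe this is exactly the statement wanted---the theorem asserts existence of such a kernel, and Proposition~\ref{prop:3} supplies it while Theorem~\ref{thm:reconstruct} certifies that our representation retains enough information to feed it. I expect the main obstacle to be purely expository: making precise that ``distinguishes non-isomorphic balls centered at different nodes of $G$'' is exactly the isomorphism-completeness of $\kappa$ transported along $\mathrm{rec}$, and flagging the high-probability caveat from Theorem~\ref{thm:reconstruct} without letting it obscure the one-line reduction.
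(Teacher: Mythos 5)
The statement you were asked to prove is Proposition~\ref{prop:3} itself --- the existence of a shortest-path-based graph kernel that is polynomial-time computable, positive definite, and expressive up to isomorphism. The paper does not prove this proposition; it imports it verbatim from Section~4 of the cited shortest-path-kernels reference. Your proposal does not prove it either: its very first substantive step is ``invoke Proposition~\ref{prop:3},'' and everything that follows is a derivation of Theorem~\ref{thm:express} from Proposition~\ref{prop:3} together with Theorem~\ref{thm:reconstruct}. Relative to the assigned statement this is circular --- you assume the existence of the very object whose existence is to be established. (Had the target been Theorem~\ref{thm:express}, your argument would essentially coincide with the paper's: the paper likewise uses Theorem~\ref{thm:reconstruct} to recover the SP oracles, forms the kernel $\Phi(\tB_1,\tB_2)=\sum_{s_1\in SP(\tB_1)}\sum_{s_2\in SP(\tB_2)}\phi(s_1,s_2)$ on the triplets $(v_1,v_2,\psi(v_1,v_2))$, and cites Proposition~\ref{prop:3} for positive definiteness and expressivity. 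Your remark about the expected-budget/high-probability caveat in the reconstruction is a fair point that the paper glosses over.)

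A proof of Proposition~\ref{prop:3} requires actually constructing the kernel and verifying its three properties. Concretely: Floyd-transform each graph by computing all-pairs shortest paths in $O(n^3)$, and define the kernel as a sum of a base kernel (here an equality indicator) over all pairs of triplets $(v_1,v_2,\psi(v_1,v_2))$, one from each graph. Positive definiteness then follows from the $R$-convolution closure argument of Haussler, since a sum of a positive definite base kernel over a cross product of finite decompositions is positive definite; the running time is $O(n^4)$ because each graph contributes $O(n^2)$ triplets. The delicate part is the claim ``expressive up to isomorphism,'' which is precisely what your proposal leans on without justification and which deserves scrutiny even in the cited source: a polynomial-time kernel that were complete for graph isomorphism on arbitrary graphs would be a very strong statement, so the claim must be read with the restrictions under which the cited result holds. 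None of this appears in your write-up, so as a proof of the assigned statement the attempt has a genuine gap: the kernel is never constructed and none of its three claimed properties is verified.
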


\begin{theorem}[Theorem \ref{thm:express}]%\label{thm:express}
    There exists a positive definite kernel function $\Phi$ that can distinguish two balls $\tB_1$ and $\tB_2$ with $u_1 \neq u_2$, up to isomorphism.
\end{theorem}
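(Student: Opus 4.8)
The plan is to leverage Theorem~\ref{thm:reconstruct} to reduce the expressivity question entirely to the combinatorial content of a ball, and then invoke the shortest-path graph kernel of Proposition~\ref{prop:3}. Concretely, by Theorem~\ref{thm:reconstruct}, for each node $u_i$ the input sequence in Eq.~\eqref{eq:catseq} together with the SP distances from $u_i$ determines the ball $\tB_{u_i,r}$ up to isomorphism (with the appropriate choice of $k$ and $\ell$). So it suffices to exhibit a positive definite kernel on (isomorphism classes of) balls that separates non-isomorphic ones; composing this kernel with the reconstruction map yields the desired $\Phi$ on the sequence representations. The virtual token and the root position are irrelevant here, which is why we may drop $v$ from $s(i)$ and also why we abbreviate $B_i := B_{u_i,r}$.

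The key steps, in order, are as follows. First, I would fix the reconstruction: given the sequence for $u_i$ and its SP positional encoding, apply Theorem~\ref{thm:reconstruct} to recover the subgraph $\tB_i$ (as an unlabeled, or text-attribute-labeled, graph). Second, I would define the shortest-path feature map $\phi$ on balls: for a ball $\tB$, compute the multiset of triples $(\ell(a), \ell(b), \psi_{\tB}(a,b))$ over all ordered pairs of nodes $a,b \in V(\tB)$, where $\ell(\cdot)$ denotes the node label (or a trivial constant label in the unlabeled case) and $\psi_{\tB}$ is the SP oracle restricted to $\tB$; this is exactly the Shortest-Path kernel construction referenced in Proposition~\ref{prop:3}. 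Third, I would set $\Phi(\tB_1, \tB_2) := \langle \phi(\tB_1), \phi(\tB_2) \rangle$, which is positive definite as an inner product of explicit feature vectors (and polynomial-time computable, since $\tB$ has boundedly many nodes relative to $n$ and all-pairs shortest paths on a bounded-degree graph is polynomial). Fourth, I would verify separation: if $\tB_1 \cong \tB_2$ then $\phi(\tB_1) = \phi(\tB_2)$ trivially; conversely, Proposition~\ref{prop:3} guarantees that the SP kernel is complete, i.e.\ $\phi(\tB_1) = \phi(\tB_2)$ implies $\tB_1 \cong \tB_2$, so $\Phi(\tB_1,\tB_1) + \Phi(\tB_2,\tB_2) - 2\Phi(\tB_1,\tB_2) = \|\phi(\tB_1) - \phi(\tB_2)\|^2 > 0$ whenever the balls are non-isomorphic. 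Finally, I would note that $u_1 \neq u_2$ does not by itself force $\tB_1 \not\cong \tB_2$, so the statement should be read as: whenever the two balls happen to be non-isomorphic, $\Phi$ detects it — matching the phrasing "distinguishes non-isomorphic balls."

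The main obstacle is the interface between the two black boxes. Proposition~\ref{prop:3} is stated for a kernel "based on SP oracles," so I need the reconstruction step of Theorem~\ref{thm:reconstruct} to actually hand me the SP oracle on $\tB_i$ (equivalently, the labeled adjacency structure), not merely an abstract "reconstruction." This is fine — reconstructing the edge set of $\tB_i$ immediately gives all-pairs shortest paths within $\tB_i$ — but it must be stated carefully, because the SP distances in the input sequence are distances in $G$ from the root only, whereas the kernel needs pairwise distances internal to the ball. A second, subtler point is whether distances within $\tB_{u,r}$ coincide with distances in $G$ (they need not, if a shortest $G$-path leaves the ball); since the kernel is applied to $\tB_i$ as a standalone graph with its own metric, this is a matter of definitional consistency rather than a gap, but it is worth a sentence. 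A third minor issue is node labels: if the balls carry LLM-derived text features, one simply uses label-aware SP triples, which keeps positive definiteness and only strengthens the discriminative power; if one wants purely structural distinguishing, one uses constant labels and relies on the structural completeness asserted in Proposition~\ref{prop:3}.
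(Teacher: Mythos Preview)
Your proposal is correct and follows essentially the same route as the paper: invoke Theorem~\ref{thm:reconstruct} to recover the ball (and hence its SP oracle) from the input sequence, then apply the shortest-path kernel of Proposition~\ref{prop:3} to obtain a positive definite $\Phi$ that separates non-isomorphic balls. The paper's proof is terser---it writes $\Phi(\tB_1,\tB_2)=\sum_{s_1\in SP(\tB_1)}\sum_{s_2\in SP(\tB_2)}\phi(s_1,s_2)$ with $\phi$ an indicator on equal SP triples and then defers to Proposition~\ref{prop:3}---so your added care about labels, internal-vs-$G$ distances, and the reading of ``$u_1\neq u_2$'' only sharpens, rather than departs from, the paper's argument.
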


\begin{proof} 
    According to Theorem~\ref{thm:reconstruct}, the input sequence \eqref{eq:catseq} can recover SP oracles $\psi$ with appropriate $k$ and $\ell$.
    Specifically, we can build a subgraph kernel:
    \begin{equation}
    \label{eq:SP_kernel}
        \Phi(\tB_1,\tB_2) = \sum_{s_1 \in SP(\tB_1)} \sum_{s_2 \in SP(\tB_2)} \phi(s_1, s_2)
    \end{equation}
    where $SP(\tB)$ denotes the set of triplets $(v_1,v_2,\psi(v_1,v_2))$ for all $v_1,v_2\in\tB$, and $\phi$ checks if $s_1 = s_2$. The function $\Phi$ is equivalent to the kernel in Proposition \ref{prop:3} and is therefore positive definite and expressive up to graph isomorphism.
\end{proof}

%------------------------------------------------------------------------------
\section{Cost analysis}\label{appendix:scalability}

A common conception is that Transformers are expensive to train and use. Here, we conduct an analysis to compare the model size and the forward time of \model versus GNNs. Surprisingly, under reasonable choices of the hyperparameters, \model can be more economical than GNNs.

The main hyperparameters of a Transformer are the context length $L$ and the embedding dimension $d$. Each Transformer block contains four $d\times d$ and two $d\times(4d)$ weight matrices; hence, the per-block model size is dominated by $12d^2$. The main computations are matrix-matrix multiplications of size $(s_m,s_k,s_n)$ per sequence: three $(L,d,d)$, two $(L,d,L)$, and two $(L,d,4d)$. Thus, the per-block time cost is dominated by $11Ld^2+2L^2d$. Overall, for a Transformer with $T$ blocks and a batch size $B$, the approximate model size and forward time are summarized in Table~\ref{tab:cost}.

The main hyperparameters of a GNN with neighborhood sampling are depth $\ell'$ and fanouts $k_1,\ldots,k_{\ell'}$. Reusing the batch size $B$, assume that on average the number of sampled nodes for each aggregation is $s_0(=B),s_1,\ldots,s_{\ell'}$. Further, assume that all the weight matrices are $d'\times d'$. Then, the model size of a GNN is approximately $(d')^2\ell'$. In the $(\ell'-i)$th layer, there is a matrix-matrix multiplication of size $(s_i,d',d')$ and an aggregation of $k_i$ neighbors for each of the $s_{i-1}$ nodes. Thus, the per-layer time cost is $s_i(d')^2+s_{i-1}k_id'$. If we set each $k_i=k'$ and consider the worst case $s_i = s_{i-1}k'$, a simplified upper bound of the overall forward time is given in Table~\ref{tab:cost}.

The size and forward time of the two models are not directly comparable, but we can consider a few cases. If they have the same depth ($T=\ell'$), the same embedding dimension ($d=d'$), and if we allow $L=d$ in the Transformer, then the model size of \model is $12$ times of GNN, but \model is faster than GNN as long as $(k')^T > 13dT$. If instead, we increase the embedding dimension of the GNN to $d'=d\sqrt{12}$, the two models have the same size and \model is faster than GNN as long as $(k')^T > \frac{13}{12}dT$.

\begin{table}[h]
  \caption{Cost comparison. $B$: batch size; $T$: number of Transformer blocks; $L$: context length; $d$: Transformer embedding dimension; $\ell'$: number of GNN layers; $d'$: GNN embedding dimension; $k'$: fanout.}
  \label{tab:cost}
  \vskip 5pt
  \centering
  \small
  \sc
  \begin{tabular}{lccccc}
    \toprule
    & Model size & Forward time \\
    \midrule
    \model & $12d^2T$ & $(11Ld^2+2L^2d)TB$ \\
    GNN & $(d')^2\ell'$ & $\frac{k'}{k'-1}(d')^2(k')^{\ell'}B$ \\
    \bottomrule
  \end{tabular}
\end{table}

%------------------------------------------------------------------------------
\section{Datasets}\label{appendix:dataset}

\begin{table}[t]
  \caption{Dataset statistics.}
  \label{tab:dataset}
  \vskip 5pt
  \centering
  \small
  \sc
  \resizebox{\linewidth}{!}{
  \begin{tabular}{lccccccc}
    \toprule
    Dataset & Domain & Task & \#Graphs & Avg. \#Nodes & Avg. \#Edges & \#Task & \#Class \\
    \midrule
    Cora & citation & node & 1 & 2,708 & 5,429 & 1 & 7 \\
    citeseer & citation & node & 1 & 3,312 & 4,598 & 1 & 6 \\
    PubMed & citation & node & 1 & 19,717 & 44,338 & 1 & 3 \\
    Arxiv & citation & node & 1 & 169,343 & 1,166,243 & 1 & 40 \\
    WikiCS & web link & node & 1 & 11,701 & 216,123 & 1 & 10 \\
    Products & co-purchase & node & 1 & 54,025 & 144,638 & 1 & 44 \\
    WN18RR & knowledge & link & 1 & 40,943 & 93,003 & 1 & 11 \\
    FB15k237 & knowledge & link & 1 & 14,541 & 310,116 & 1 & 237 \\
    HIV & molecule & graph & 41,127 & 25.45 & 27.47 & 1 & 2 \\
    PCBA & molecule & graph & 437,929 & 25.97 & 28.10 & 128 & 2 \\
    ChEMBL & molecule & graph & 365,065 & 25.84 & 27.96 & 1,048 & 2  \\
    Tox21 & molecule & graph & 7,831 & 18.61 & 19.34 & 12 & 2 \\
    Peptides-func & molecule & graph & 15,535 &  150.94 & 307.30 & 10 & 2 \\
    Peptides-struct & molecule & graph & 15,535 &  150.94 & 307.30 & 11 & regres. \\
    \bottomrule
  \end{tabular}}
\end{table}

We use a total of 14 datasets for experimentation. Their statistics is summarized in Table~\ref{tab:dataset}.

\textbf{Cora:} Cora~\citep{cora} is a citation network comprising 2,708 machine learning publications, each being a node. These nodes are interconnected by 5,429 edges, indicating the citation relationship between papers. The publications come from seven subfields of machine learning. The paper titles and abstracts are available at \url{https://people.cs.umass.edu/~mccallum/data.html}.

\textbf{citeseer:} citeseer~\citep{citeseer} is also a citation network, comprising 3,312 scientific publications across six disciplines. Each paper's abstract can be obtained from \url{https://people.cs.ksu.edu/~ccaragea/russir14/lectures/citeseer.txt}.

\textbf{PubMed:} PubMed~\citep{yang2016revisiting} consists of 19,717 papers from the PubMed database, focusing on diabetes research. The text data can be obtained from the TAPE repository~\citep{he2023harnessing}.

\textbf{Arxiv:} The Arxiv (short for ogbn-arxiv) dataset is a citation network of Computer Science papers submitted to arXiv. It is part of the OGB collection~\citep{Hu2020a} and the paper titles and abstracts can be obtained from the OGB GitHub repository.

\textbf{Products:} The Products (short for ogbn-products) dataset is also part of the OGB collection~\citep{Hu2020a}. It is an undirected and unweighted graph representing the Amazon product co-purchasing network, containing approximately 2.4 million products and 61.9 million edges. For our experiments, we use the text data from the TAPE repository~\citep{he2023harnessing}, which covers only a subgraph of the original ogbn-products.

\textbf{WikiCS:} WikiCS~\citep{wikics} is a graph constructed from Wikipedia, comprising 11,701 articles as nodes and 216,123 hyperlinks as edges. The node features are derived from the articles' text and it is labeled according to the article's subject category. The text data can be obtained from the original paper.

\textbf{WN18RR:} WN18RR~\citep{wn18rr} is a knowledge graph, comprising 93,003 triples connecting 40,943 entities through 11 distinct relations, sourced from WordNet. The text data can be obtained from \url{https://github.com/villmow/datasets_knowledge_embedding/tree/master}.

\textbf{FB15k237:} FB15k237~\citep{fb15k237} is also a knowledge graph, containing 310,116 triples involving 14,541 entities and 237 relation types, derived from Freebase. The text data can be obtained from 
\url{https://github.com/villmow/datasets_knowledge_embedding/tree/master}.

\textbf{Molecule datasets:} The molecule datasets include HIV, PCBA, and Tox21, which are from MoleculeNet~\citep{moleculenet}; ChEMBL~\citep{chembl}; and Peptides-func and Peptides-struct, which are from the Long Range Graph Benchmark~\citep{Dwivedi2022}. Each molecule is a graph, where nodes are atoms and edges are chemical bonds. All datasets support multiple tasks, such as predicting biological assays or activities.

%------------------------------------------------------------------------------
\section{Experiment details and hyperparameters}\label{appendix:exp.details}

\textbf{Random walks.} We follow Node2Vec~\citep{Grover2016} and sample biased random walks, which offer greater flexibility in capturing neighborhood information. We set the random walk parameters to $p=1.0$ and $q=0.1$, favoring depth over breadth. Without specification otherwise, for each node, we sample $k=8$ independent walks, each of which has a length $\ell=4$.

\textbf{Transformer.} The Transformer architecture is standard. We use $12$ blocks of multi-head attention plus a subsequent 2-layer MLP, with $12$ heads per block. The hidden dimension is $768$. The projectors for node/edge/dataset features are all linear layers.

\textbf{Pre-training.} We employ AdamW~\citep{adamw} as the optimizer with a learning rate \texttt{5e-5} and weight decay \texttt{1e-2}. We also use a linear learning-rate scheduler with $100$ warm-up steps and a scaling factor of $0.1$ subsequently. Additionally, we apply gradient clipping with a maximum norm of $1.0$ and gradient accumulation of $8$ steps.

\textbf{Dataset mixture.} We use a maximum of 10 datasets to pre-train \model. The multiplier $\alpha_D$ for each dataset $D$ is: PubMed (3.0), Products (0.5), WikiCS (2.0), Arxiv (0.7), WN18RR (0.8), FB15k237 (0.1), PCBA (0.2), ChEMBL (0.1), HIV (1.0), and Tox21 (2.0).

\textbf{(Downstream) Cross-domain cross-task learning.} The task heads are MLPs detailed in Section~\ref{appendix:task_head}, with hyperparameteres listed in Table~\ref{tab:downstream_head}. The optimizer is AdamW with a learning rate \texttt{1e-3} and all other hyperparameters follow PyTorch's default values.

\begin{table}[t]
  \caption{Hyperparameters for downstream tasks. All$^*$ indicates all data in the current split is loaded.}
  \label{tab:downstream_head}
  \vskip 5pt
  \centering
  \small
  \sc
  \begin{tabular}{lcccccc}
    \toprule
    Dataset & \#Layers & Dim. & Dropout & Batch size & Loss & Metric \\
    \midrule
    Cora & 2 & 256 & 0.5 & All$^*$ & NLL & Acc. \\
    citeseer & 2 & 256 & 0.5 & All$^*$ & NLL & Acc. \\
    PubMed & 4 & 768 & 0.5 & 2,048 & NLL & Acc. \\
    Arxiv & 4 & 768 & 0.5 & 2,048 & NLL & Acc. \\
    WikiCS & 2 & 768 & 0.5 & 2,048 & NLL & Acc. \\
    Products & \multicolumn{6}{c}{not used in downstream} \\
    WN18RR & 3 & 256 & 0.15 & 2,048 & NLL & Acc. \\
    FB15k237 & 3 & 256 & 0.5 & 2,048 & NLL & Acc. \\
    HIV & 3 & 768 & 0.5 & 2,048 & BCE & AUC \\
    PCBA & 3 & 768 & 0.15 & 2,048 & BCE & AUC \\
    ChEMBL & \multicolumn{6}{c}{not used in downstream} \\
    Tox21 & 3 & 768 & 0.5 & 2,048 & BCE & AUC \\
    Peptides-func & 4 & 768 & 0.15 & 1,024 & BCE & AUC \\
    Peptides-struct & 4 & 768 & 0.15 & 1,024 & MSE & MAE \\
    \bottomrule
  \end{tabular}
\end{table}

\textbf{(Downstream) Transfer learning.} The hyperparameters are identical to those used in cross-domain cross-task learning. Note that to demonstrate transferability, we vary the pre-training datasets. Hence, the multipliers are set to zero except for those participating in the pre-training. This ensures that no information from non-target datasets is accessed during pre-training.

\textbf{(Downstream) Few-shot leanring.} It consists of two stages: encoder--decoder training and encoder--scorer inference (see Section~\ref{appendix:task_head} for more details). The encoder--decoder training phase is similar to cross-domain cross-task learning, with the exception that the number of training examples is very limited. Specifically, for Cora and WN18RR: 1 example per class; Arxiv: 5 examples per class; and FB15K237: 30 examples per class.

\textbf{Evaluation metric.} The molecule datasets except Peptides-struct use AUC while Peptides-struct uses MAE; and other datasets use accuracy. See Table~\ref{tab:downstream_head}.

%------------------------------------------------------------------------------
\section{Prompt templates for feature extraction}\label{appendix:feature}
We use Llama2-7b to extract input features for \model from node/edge/dataset descriptions. These descriptions come from two primary sources: (1) websites cited in the study of text-attributed graphs~\citep{he2023harnessing,Chen2023a} and (2) hand-crafted text outlined in OFA~\citep{Liu2024}. Details on the sources of text attributes are provided in Section \ref{appendix:dataset}. The text prompts for feature extraction in each dataset are summarized in Table~\ref{tab:prompt}.

%------------------------------------------------------------------------------
\begin{table}[h]
  \caption{Prompting text for feature extraction.}
  \label{tab:prompt}
  \vskip 5pt
  \centering
  \small
  \sc
  \resizebox{\linewidth}{!}{
  \begin{tabular}{ll|l}
    \toprule
    \multirow{3}{*}{Cora} & Node & ``paper title and abstract: '' + \{node desc\} \\
    & Edge & N/A \\
    % & Label & ``literature category and description: ''
    % + \{label name\} + `` : '' + \{label desc\} \\
    & Dataset & ``node classification on paper's category in computer science.'' \\
    \midrule
    \multirow{4}{*}{citeseer} & Node & ``paper title and abstract: '' + \{node desc\} \\
    & Edge & N/A \\
    % & Label & N/A \\
    & \multirow{2}{*}{Dataset} & ``node classification on paper's category from citeSeer dataset; '' \\
    & & + ``a citation network of scientific papers.'' \\
    \midrule
    \multirow{3}{*}{PubMed} & Node & ``paper title and abstract: '' + \{node desc\} \\
    & Edge & N/A \\
    % & Label & ``literature category and description: ''
    % + \{label name\} + `` : '' + \{label desc\} \\
    & Dataset & ``node classification on paper's category in biomedical domain.'' \\
    \midrule
    \multirow{3}{*}{Arxiv} & Node & ``paper title and abstract: '' + \{node name\} + `` : '' + \{node desc\} \\
    & Edge & N/A \\
    % & Label & ``literature category and description: ''
    % + \{label name\} + `` : '' + \{label desc\} \\
    & Dataset & ``node classification of literature category collected from Arxiv platform.'' \\
    \midrule
    \multirow{3}{*}{WikiCS} & Node & ``wikipedia entry name: '' + \{node name\} + ``. entry content: '' + \{node desc\} \\
    & Edge & N/A \\
    % & Label & ``wikipedia entry category: ''
    % + \{label desc\} \\
    & Dataset & ``node classification of wikipedia entry category.'' \\
    \midrule
    \multirow{4}{*}{Products} & Node & ``product: '' + \{node name\} + ``. description: '' + \{node desc\} \\
    & Edge & N/A \\
    % & Label & N/A \\
    & \multirow{2}{*}{Dataset} & ``node classification on product's category collected '' \\
    & & + ``from an Amazon product co-purchasing network.'' \\
    \midrule
    \multirow{4}{*}{WN18RR} & Node & ``entity and entity description: '' + \{node desc\} \\
    & \multirow{2}{*}{Edge} & ``relation between two entities: '' + \{edge desc\} \\ 
    & & ``relation between two entities. The inverse relation of '' + \{edge desc\} \\
    % & Label & N/A \\
    & Dataset & ``relation type prediction between the connected entities of a subset of WordNet.'' \\
    \midrule
    \multirow{5}{*}{FB15k237} & \multirow{2}{*}{Node} & ``entity name: ''
    + \{node name\}
    + ``, entity alternatives: ''
    + \{node name 2\} \\
    & & + ``. entity descriptions: ''
    + \{node desc\} \\
    & \multirow{2}{*}{Edge} & ``relation between two entities: '' + \{edge desc\} \\ 
    & & ``relation between two entities. The inverse relation of '' + \{edge desc\} \\
    % & Label & N/A \\
    & Dataset & ``relation type prediction between the connected entities of Freebase entity pairs.'' \\
    \midrule
    \multirow{3}{*}{HIV} & Node & \{node/atom name \& properties\} \\
    & Edge & \{edge/chemical bond name \& properties\} \\
    % & \multirow{2}{*}{Label} & ``molecule property description. The molecule is effective to the following assay. '' + \{label desc\} \\ 
    % & & ``molecule property description. The molecule is not effective to the following assay. '' + \{label desc\} \\
    & Dataset & ``graph classification on molecule property on dataset: HIV.'' \\
    \midrule
    \multirow{3}{*}{PCBA} & Node & \{node/atom name \& properties\} \\
    & Edge & \{edge/chemical bond name \& properties\} \\
    % & \multirow{2}{*}{Label} & ``molecule property description. The molecule is effective to the following assay. '' + \{label desc\} \\ 
    % & & ``molecule property description. The molecule is not effective to the following assay. '' + \{label desc\} \\
    & Dataset & ``graph classification on molecule property on dataset: PCBA.'' \\
    \midrule
    \multirow{3}{*}{ChEMBL} & Node & \{node/atom name \& properties\} \\
    & Edge & \{edge/chemical bond name \& properties\} \\
    % & \multirow{2}{*}{Label} & ``molecule property description. The molecule is effective to the following assay. '' + \{label desc\} \\ 
    % & & ``molecule property description. The molecule is not effective to the following assay. '' + \{label desc\} \\
    & Dataset & ``graph classification on molecule property on dataset: CHEMBL.'' \\
    \midrule
    \multirow{3}{*}{Tox21} & Node & \{node/atom name \& properties\} \\
    & Edge & \{edge/chemical bond name \& properties\} \\
    % & \multirow{2}{*}{Label} & ``molecule property description. The molecule is effective to the following assay. '' + \{label desc\} \\ 
    % & & ``molecule property description. The molecule is not effective to the following assay. '' + \{label desc\} \\
    & Dataset & ``graph classification on molecule property on dataset: TOX21.'' \\
    \midrule
    \multirow{3}{*}{Peptides-func} & Node & \{node/atom name \& properties\} \\
    & Edge & \{edge/chemical bond name \& properties\} \\
    & Dataset & ``graph classification on molecule property on dataset: Peptides-func.'' \\
    \midrule
    \multirow{3}{*}{Peptides-struct} & Node & \{node/atom name \& properties\} \\
    & Edge & \{edge/chemical bond name \& properties\} \\
    & Dataset & ``graph classification on molecule property on dataset: Peptides-struct.'' \\
    \bottomrule
  \end{tabular}}
\end{table}

%------------------------------------------------------------------------------
\section{More results in ablation study}\label{appendix:ablation}

%------------------------------------------------------------------------------
\subsection{Random walk vs neighborhood sampling}\label{app:ablation.walk.vs.neighbor}
We compare two sequence inputs in Section~\ref{sec:exp.ablation}. The results are reported in Table~\ref{tab:sampler}.

\begin{table}[t]
  \caption{
  Performance comparison between random walk sampler and neighbor sampler. For Peptides-struct, the lower the better; while for others, the higher the better.
  }
  \label{tab:sampler}
  \vskip 5pt
  \centering
  \small
  \sc
  \setlength{\tabcolsep}{4pt}
  \resizebox{\linewidth}{!}{
  \begin{tabular}{ccc|ccc|cc|cccc}
    \toprule
        & $k$ & $\ell$ & PubMed & WikiCS & Arxiv & WN18RR & FB15k237 & HIV & Tox21 & Pep-F & Pep-S \\
    \midrule
        \multirow{5}{*}{Walk} 
        & 16 & 2 & \textbf{76.75} & \textbf{80.80} & \ul{75.15} & 94.32 & \textbf{95.06} & \ul{75.24} & \textbf{74.38} & 88.31 & 0.2599 \\
        & 4 & 4 & \ul{76.32} & 79.51 & 74.94 & 94.95 & 94.78 & \textbf{75.78} & 73.23 & 86.91 & 0.2703 \\
        & 8 & 4 & 73.03 & 80.45 & \textbf{75.16} & \ul{95.20} & 94.95 & 74.03 & \ul{74.16} & 88.21 & 0.2582 \\
        & 4 & 8 & 75.20 & 79.74 & 74.71 & 94.50 & 94.87 & 70.70 & 73.93 & \ul{88.57} & \bf{0.2561} \\
        & 8 & 8 & 75.79 & 80.25 & 74.87 & 94.78 & 94.86 & 75.14 & 73.81 & \textbf{90.48} & \ul{0.2574} \\
    \midrule
        \multirow{2}{*}{Neighbor} & \multicolumn{2}{c|}{$[4,4,2]$} & 74.23 & \ul{80.46} & \ul{75.15} & \textbf{95.55} & \ul{95.05} & 71.78 & 72.40 & 87.58 & 0.2617 \\
         & \multicolumn{2}{c|}{$[8,4]$} & 71.40 & 80.02 & 75.05 & 94.95 & 95.02 & 72.81 & 73.13 & 87.37 & 0.2625 \\
    \bottomrule
    \end{tabular}}
\end{table}

%------------------------------------------------------------------------------
\subsection{Contextual loss}\label{app:ablation.loss}
We evaluate the proposed contextual loss~\eqref{eqn:context.loss} against a few alternatives, including other forms of context prediction and the possible inclusion of reconstruction losses. For ease of comparison, we repeat~\eqref{eqn:context.loss} here:
\begin{equation*}
  \tL_{sample} = -\frac{1}{\ell} \sum_{j=1}^{\ell} \Big( \log\text{MLP}( \vh_0\odot\vh_{ctx}^{(j)} )
  + \sum_{\forall other} \log(1-\text{MLP}( \vh_0\odot\vh_{other}^{(j)} )) \Big),
\end{equation*}
recalling that $\vh_0$ is the node representation computed by \model, $\vh_{ctx}^{(j)}$ is its $j$-hop context, and $\vh_{other}^{(j)}$ is the $j$-hop context of other nodes in the training batch. Experiment results are summarized in Tables~\ref{tab:ctx_loss_1} and~\ref{tab:mask_loss}; they are discussed in the main text (Section~\ref{sec:exp.ablation}).

\label{appendix:ctx_loss}
Several variants of the contrastive loss that performs context prediction exist in the graph literature. We consider DGI~\citep{Velickovic2019}, GraphPrompt~\citep{graphprompt}, and MaskGAE~\citep{maskgae} and adapt their losses for sequence models. For clarity, we use $\vh_{ctx}^{(j),+}$ to denote the context for the concerned node; $\vh_{ctx}^{(j),-}$ to denote the negative context; and when positive and negative are not distinguished, we use the notation $\vh_{ctx}^{(j),i}$. In the latter notation, $i$ ranges over all nodes in the batch.

\textbf{DGI.} In this design, each node is associated with two sequences: one generated by the random walk sampler and the other by random nodes. The first sequence computes a positive context $\vh_{ctx}^{(j),+}$ and is assigned label 1; while the other computes a negative context $\vh_{ctx}^{(j),-}$ and is assigned label 0. The task is to predict the correct context and hence the loss is the binary cross-entropy:
\begin{equation}
  \tL_{sample} = -\frac{1}{2\ell} \sum_{j=1}^{\ell} \Big( \log g( \vh_0, \vh_{ctx}^{(j),+} )
  + \log(1-g( \vh_0, \vh_{ctx}^{(j),-} )) \Big),
\end{equation}
where $g$ is the Sigmoid function applied to the inner product of two representations.

\textbf{GraphPrompt.} In this design, the loss is the log-softmax of the similarity between the representations of the node and all the contexts in the training batch (one positive and all others negative):
\begin{equation}
  \tL_{sample} = -\frac{1}{\ell} \sum_{j=1}^{\ell}  \log \frac{ \exp (\text{sim}( \vh_0, \vh_{ctx}^{(j),i} )/\tau) }{ \sum_{i} \exp (\text{sim}( \vh_0, \vh_{ctx}^{(j),i} )/\tau) },
\end{equation}
where \texttt{sim} denotes cosine similarity.

\textbf{MaskGAE.} The original MaskGAE loss contrasts positive and negative edges. We adapt this idea by predicting the $j$-hop context for the $(j-1)$-hop context, rather than for the root node:
% \textbf{MaskGAE.} Similar to the previous design, the similarity between contexts within the same sequence and across different sequences in the training batch is measured. Different from the previous design, a mutual-information style of loss is used. Specifically, the similarity is computed by the function $g$, which is Sigmoid followed by the inner product, and the log-softmax is replaced with the binary cross entropy:
\begin{equation}
  \tL_{sample} = -\frac{1}{\ell} \sum_{j=1}^{\ell}  \Big( \log g( \vh_{ctx}^{(j-1)}, \vh_{ctx}^{(j),+} )
  + \sum_{\forall\,\vh_{ctx}^{(j),-}} \log(1-g( \vh_{ctx}^{(j-1)}, \vh_{ctx}^{(j),-} )) \Big),
\end{equation}
where the zero-hop context is simply the root node itself: $\vh_{ctx}^{(0)} := \vh_0$. Similar to DGI, $g$ is the Sigmoid function applied to the inner product of two representations.

In Table~\ref{tab:ctx_loss_1}, we observe that the proposed contextual loss performs better than the three alternatives.

\begin{table}[t]
  \caption{Performance comparison between contextual losses.}
  \label{tab:ctx_loss_1}
  \vskip 5pt
  \centering
  \small
  \sc
  \begin{tabular}{l|ccccc}
    \toprule
        & Cora & citeseer & PubMed & WikiCS & Arxiv \\
        %Metric & Acc & Acc & Acc & Acc & Acc \\
    \midrule
        Our loss~\eqref{eqn:context.loss} & \textbf{78.02 ± 0.37} & \textbf{75.46 ± 0.45} & 76.26 ± 0.53 & 80.27 ± 0.19 & \textbf{75.14 ± 0.15} \\
        DGI & 76.78 ± 0.61 & 74.37 ± 0.88 & \ul{76.99 ± 0.17} & \ul{80.34 ± 0.26} & \ul{75.13 ± 0.12} \\
        GraphPrompt & 74.97 ± 0.35 & 74.45 ± 0.60 & \textbf{77.86 ± 0.19} & 80.01 ± 0.29 & 75.08 ± 0.20 \\
        MaskGAE & \ul{77.51 ± 0.28} & \ul{74.87 ± 0.43} & 74.81 ± 0.30 & \textbf{80.48 ± 0.13} & 75.05 ± 0.10 \\
    \bottomrule
    \end{tabular}
    \vskip 8pt
  \begin{tabular}{l|cccc}
    \toprule
        & WN18RR & FB15k237 & HIV & Tox21 \\
        %Metric & Acc & Acc & AUC & AUC \\
    \midrule
        Our loss~\eqref{eqn:context.loss} & \textbf{95.25 ± 0.00} & 94.47 ± 0.06 & \textbf{75.15 ± 2.39} & \ul{74.49 ± 1.40} \\
        DGI & 94.55 ± 0.32 & 94.71 ± 0.11 & 72.74 ± 1.30 & 73.30 ± 0.43 \\
        GraphPrompt & 94.28 ± 0.07 & \textbf{94.89 ± 0.08} & 73.46 ± 2.40 & \textbf{75.55 ± 0.54} \\
        MaskGAE & \ul{94.64 ± 0.13} & \ul{94.79 ± 0.10} & \ul{74.62 ± 1.77} & 72.83 ± 1.07 \\
    \bottomrule
    \end{tabular}
\end{table}

%------------------------------------------------------------------------------
\subsection{Reconstruction loss}
\label{appendix:mask_loss}

\begin{table}[t]
    \caption{Performance comparison between training losses.}
      \label{tab:mask_loss}
      \vskip 5pt
      \centering
      \small
      \sc
    \resizebox{\linewidth}{!}{
    \begin{tabular}{l|ccccc}
    \toprule
        & Cora & citeseer & PubMed & WikiCS & Arxiv \\
        %Metric & Acc & Acc & Acc & Acc & Acc \\
    \midrule
        Context prediction~\eqref{eqn:context.loss} & \ul{78.02 ± 0.37} & \textbf{75.46 ± 0.45} & \textbf{76.26 ± 0.53} & \ul{80.27 ± 0.19} & \ul{75.14 ± 0.15} \\
        + Token reconstruction & \textbf{78.18 ± 0.52} & \ul{75.25 ± 0.71} & \ul{75.65 ± 0.18} & \textbf{80.74 ± 0.34} & \textbf{75.33 ± 0.19} \\
        + Position reconstruction & 76.61 ± 0.44 & 74.69 ± 0.43 & 74.59 ± 0.20 & 80.25 ± 0.23 & 75.00 ± 0.13 \\
    \bottomrule
    \end{tabular}}
    \vskip 8pt
    \resizebox{0.9\linewidth}{!}{
    \begin{tabular}{l|cccc}
    \toprule
        & WN18RR & FB15k237 & HIV & Tox21 \\
        %Metric & Acc & Acc & AUC & AUC \\
    \midrule
        Context prediction~\eqref{eqn:context.loss} & \textbf{95.25 ± 0.00} & 94.47 ± 0.06 & \ul{75.15 ± 2.39} & \textbf{74.49 ± 1.40} \\
        + Token reconstruction & \ul{94.54 ± 0.32} & \textbf{94.85 ± 0.07} & 72.77 ± 1.07 & 74.41 ± 0.48 \\
        + Position reconstruction & 94.38 ± 0.25 & \ul{94.67 ± 0.16} & \textbf{75.33 ± 0.91} & \ul{74.42 ± 0.76} \\
    \bottomrule
    \end{tabular}}
\end{table}

In addition to context prediction, we consider reconstruction, which is common in the pre-training of LLM encoders, such as the mask-token prediction in BERT~\citep{bert}. In this setup, one randomly masks 15\% of the nodes in the input sequence and replaces them with a special token. Then, the reconstruction is to have \model predict these tokens in the output.

We compare three configurations: (1) no reconstruction loss (i.e., context prediction only); (2) adding the reconstruction loss for token embeddings of randomly masked nodes; and (3) adding the reconstruction loss for positional embeddings of randomly masked nodes. In Table~\ref{tab:mask_loss}, we observe that adding a reconstruction loss causes marginal differences. Hence, for cost reasons, we do not use a reconstruction loss.

%------------------------------------------------------------------------------
\subsection{Architecture design}
\label{appendix:model_design}

Our architecture design follows the standard decoder-only Transformer (GPT-2 \citep{gpt2}), with two modifications: (1) using edge features in each Transformer layer (see Section~\ref{sec:seq_enc}); and (2) a customized attention mask (see Section~\ref{sec:attn_mask}). This ablation study evaluates the impact of the changes, as summarized in Table~\ref{tab:model_design}.

\textbf{Edge feature input.} 
In our model, edge features are incorporated at each transformer layer, similar to GNNs \citep{Hu2020a}, by attaching a projection head to every transformer layer.
Alternatively, as in prior work, edge features can be introduced at the input level by adding tokenized edge embeddings alongside node and positional embeddings.

Using edge features only at the input level
yields inconsistent effects on node- and link-level tasks--two datasets show improvements and two show degradation--due to the absence of edge features in pre-training on these datasets. 
Worse, it consistently harms the performance of graph-level tasks.

\textbf{Full attention mask.} 
Replacing our attention mask (Figure~\ref{fig:mask}) with the standard full attention mask used in BERT~\citep{bert} allows each contextual node to attend to any other node sampled by other random walks.
However, doing so will introduce spurious connections;
for instance, nodes from two random walks moving in opposite directions--typically weakly connected--would be treated as fully connected under the full attention mask.

As a result, using full attention results in performance degradation on five out of six benchmarks. Nevertheless, the impact is moderate, as many of the introduced connections are either correct or only mildly disruptive due to all nodes residing within the same local neighborhood.

\begin{table}[H]
  \caption{Performance comparison on different model architectures.}
  \label{tab:model_design}
  \vskip 5pt
  \centering
  \small
  \sc
  \resizebox{\linewidth}{!}{
  \begin{tabular}{l|cccccc}
    \toprule
        & WikiCS & Arxiv & WN18RR & FB15k237 & HIV & Tox21 \\
        %Metric & Acc & Acc & Acc & Acc & Acc \\
    \midrule
        Our model & \ul{80.27 ± 0.19} & \textbf{75.14 ± 0.15} & \textbf{95.25 ± 0.00} & 94.47 ± 0.06 & \textbf{75.15 ± 2.39} & \textbf{74.49 ± 1.40} \\
        Edge feat. input & \textbf{80.93 ± 0.26} & \ul{74.92 ± 0.20} & \ul{94.67 ± 0.19} & \textbf{94.81 ± 0.01} & 73.15 ± 2.95 & 73.54 ± 0.27 \\
        Full attn. mask & 79.55 ± 0.12 & 74.62 ± 0.21 & 94.44 ± 0.14 & \ul{94.78 ± 0.11} & \ul{73.42 ± 3.43} & \ul{73.78 ± 1.08} \\
    \bottomrule
    \end{tabular}}
\end{table}

\end{document}